\documentclass{article}

\usepackage[preprint]{neurips_2023}
\usepackage{natbib}
\usepackage{amsmath,amssymb,amsthm,mathtools,bm}
\usepackage{booktabs}
\usepackage{graphicx}
\usepackage[font=small,labelfont=bf]{caption}
\graphicspath{{figures/}}
\usepackage{enumitem}
\usepackage{microtype}
\usepackage{placeins}
\usepackage{xcolor}
\usepackage{xurl}
\usepackage{hyperref}

\definecolor{paperlink}{RGB}{35,65,100}
\hypersetup{colorlinks=true,linkcolor=paperlink,citecolor=paperlink,urlcolor=paperlink,
  pdftitle={Learning to Extrapolate in Depth with Looped Transformers},
  pdfauthor={Huitao Yang, Jianhao Huang, Zixuan Wang, Baharan Mirzasoleiman}}
\newtheorem{theorem}{Theorem}
\newtheorem{proposition}{Proposition}
\newtheorem{lemma}{Lemma}

\newtheorem{assumption}{Assumption}
\newtheorem{definition}{Definition}
\newtheorem{remark}{Remark}
\newtheorem{example}{Example}

\begin{document}
\raggedbottom

\title{Learning to Extrapolate in Depth\\with Looped Transformers}

\author{%
  {\normalsize
  Huitao Yang$^{1}$\thanks{Equal contribution.} \quad
  Jianhao Huang$^{1}$\footnotemark[1] \quad
  Zixuan Wang$^{2}$ \quad Baharan Mirzasoleiman$^{1}$}\\
  $^{1}$University of California, Los Angeles\\
  $^{2}$Princeton University
}
\maketitle

\begin{abstract}
    Looped transformers reuse a single block across iterations, raising a natural question: \textit{how far can a block trained at one depth be safely extrapolated to greater depths?} We study this question for all-pairs graph connectivity, where harder graphs require longer-range information propagation. We show that under exchangeability and symmetric positive initialization, population training at depth $K$ converges to an update rule governed by a single effective parameter that balances graph propagation against fitting structure unresolved at training depth. This trade-off creates a finite \textit{extrapolation window}: too few test-time loops provide insufficient propagation, while too many eventually create false connections. We derive explicit endpoints for this window and bound the graph hardness solvable beyond training as a function of training depth and hardness. In particular, a test depth chosen solely from the training distribution can guarantee recovery on substantially harder graphs, with a certified depth gain that grows exponentially as training depth outpaces training hardness. Experiments validate the predicted windows and show how compute budget and initialization affect the optimal training depth.
\end{abstract}

\section{Introduction}
In standard transformers, depth and parameter count grow together. Looped
transformers decouple the two by applying one shared block repeatedly
\citep{dehghani2019universal,giannou2023looped,yang2024looped}. Because the
number of loops can be changed after training, looping offers a simple way
to scale test-time compute: recurrent-depth language models improve when
unrolled further at inference \citep{geiping2025scaling}, and a few looped
layers can match much deeper non-looped models \citep{saunshi2025reasoning}.
This raises a natural question: \textit{how far can a looped transformer safely extrapolate beyond the depth at which it was trained?} A natural approach is to train with a few loops on tractable instances, then tackle harder ones by running the block for more loops \citep{schwarzschild2021can,fan2025looped}.
Yet more computation does not always help. Performance can deteriorate
under excessive test-time iteration, a phenomenon often described as
overthinking \citep{bansal2022end}, and transformers often fail on inputs
longer than those seen in training \citep{anil2022exploring,zhou2024algorithms}.
What, then, does training actually learn at a fixed depth, and what happens
when that learned computation is run longer? 

Graph connectivity provides a controlled setting to answer these questions
\citep{velickovic2020neural,velickovic2022clrs,ye2026connectivity}. Deciding
whether two vertices are connected requires composing local edges into
global reachability, so harder instances require information to propagate
over longer distances. Instance hardness therefore translates naturally
into the number of iterations needed for successful computation.

Theory has made progress on related questions, often using graph reasoning
as a testbed. On the expressivity side, \citet{giannou2023looped} and
\citet{deluca2024simulation} constructed looped transformers that execute
general programs and graph algorithms, while
\citet{sanford2024understanding} and \citet{merrill2025little} showed that
logarithmic depth suffices for graph connectivity. On the optimization
side, \citet{gatmiry2024looped} proved that linear looped transformers learn
multi-step preconditioned gradient descent for in-context regression, and
\citet{wu2026power} proved that a looped linear transformer with layer
normalization learns the power method, so that unrolling the trained block
runs further power iterations. For graph connectivity itself,
\citet{ye2026connectivity} connected these perspectives: a simplified
$L$-layer transformer can represent connectivity for graphs of diameter up
to $3^L$, and training finds this solution only when most training graphs
lie within that capacity. 

These results establish that looped transformers can represent iterative
algorithms and, in some settings, that training can discover them. They do
not, however, characterize what happens when a trained block is frozen and
unrolled beyond its training depth. In particular, training at a fixed
depth $K$ need not recover an ideal iterative rule
\citep{ye2026connectivity}. Instead, the learned block balances propagating
graph information against fitting the structure left unresolved at depth
$K$. This creates a fundamental tension when the block is reused:
additional loops initially propagate information farther and enable harder
instances, but repeated application also compounds the learned bias and can
eventually create false connections. Thus, rather than improving
monotonically with test-time depth, a trained looped transformer can have a
finite \textit{extrapolation window} in which additional computation helps.

We study this phenomenon on all-pairs graph connectivity. A looped attention
block is trained at depth $K$ and then evaluated with frozen weights at
depths $T>K$ on harder test graphs, requiring exact recovery of all
labels. Figure~\ref{fig:depth-overview} illustrates the phenomenon we
explain: a block trained with $K=3$ loops on four-vertex paths recovers
16-vertex paths exactly at $T=6$, while fewer or more loops fail. We ask
when such an extrapolation window exists, how its boundaries depend on
training, and how much harder a graph can be solved beyond the training
regime. Our main contributions are as follows:

\begin{itemize}[leftmargin=*,itemsep=2pt,topsep=2pt]
\item We characterize the training dynamics. For exchangeable training
distributions and symmetric positive initialization, population gradient
flow on the full weight matrices reduces to a single effective parameter
and converges exponentially to an explicit optimum
(Theorem~\ref{thm:gf}). This optimum balances propagating graph information
against the residual left unresolved at the training depth.

\item We characterize the resulting extrapolation window. The trained block
recovers every connectivity label of a test graph throughout a finite
window of test depths (Proposition~\ref{prop:test-window}). The lower end
reflects the graph's hardness, while the upper end marks when additional
loops begin to create false connections. An Experiments on Erdős–Rényi (ER) graphs illustrate these finite windows of successful test depths
(Figure~\ref{fig:ood-window-families}).

\item We quantify how far extrapolation can go. We prove that the learned block can solve every test
graph up to a hardness upper-bound depending on training difficulty and training depth.
(Theorem~\ref{thm:hardness-gap}). A high-probability version extends this
result to training distributions with tails of harder graphs
(Theorem~\ref{thm:probabilistic-extrapolation}).

\item We instantiate these guarantees under two distribution shifts, from
short to longer paths and from sparse to denser Erd\H{o}s--R'enyi graphs
(Examples~\ref{ex:path-generalization} and \ref{ex:er-generalization}).
Numerical experiments evaluate transfer to harder graphs
(Figure~\ref{fig:ood-windows}) and show how the choice of training depth
depends on compute and initialization (Section~\ref{sec:finite-compute}).
\end{itemize}

To the best of our knowledge, these are the first guarantees that quantify
how far a trained looped transformer can extrapolate beyond its training
depth and when additional test-time iterations become harmful. More broadly,
our analysis explains why test-time looping can first enable harder
computation and then fail: the same learned update rule that propagates
information farther also determines the eventual limit of safe
extrapolation.

\section{Related Work}
\label{sec:related}

\paragraph{Looped transformers and test-time depth.}
Parameter sharing across depth goes back to Universal Transformers
\citep{dehghani2019universal} and ALBERT \citep{lan2020albert}. Looped
transformers can be programmed as general-purpose computers
\citep{giannou2023looped}, learn in-context algorithms with few parameters
\citep{yang2024looped}, match deeper non-looped models on reasoning tasks
\citep{saunshi2025reasoning}, and, at language-model scale, improve when
unrolled further at test time \citep{geiping2025scaling}. Recurrent models
can also solve harder instances by allocating more computation
\citep{schwarzschild2021can}, but performance can degrade when they are
iterated too long \citep{bansal2022end}; looped transformers with
input-adaptive step counts have also shown length generalization on
algorithmic tasks \citep{fan2025looped}. These works establish the empirical
promise and limitations of additional computation; we instead characterize
theoretically what a trained looped block does when it is frozen and
unrolled beyond its training depth.

\paragraph{Expressivity for algorithmic and graph reasoning.}
Transformers with hard attention are Turing complete
\citep{perez2021attention}, and looped transformers simulate graph
algorithms with a graph-size-independent parameter count
\citep{deluca2024simulation}. Logarithmic depth is sufficient for graph
connectivity \citep{sanford2024transformers,merrill2025little} and, conditionally on a computational conjecture, necessary for sublinear-width models \citep{sanford2024understanding}. 
Our looped architecture achieves a $3^L$ connectivity range after $L$ loops, consistent with this logarithmic-depth scaling. These expressivity results establish what a looped transformer can represent, but do not characterize the behavior of a block after it has been trained at a fixed depth and subsequently unrolled beyond that depth.

\paragraph{Training dynamics of algorithm-implementing transformers.}
For in-context linear regression, trained transformers were observed to fit
simple function classes \citep{garg2022what} and to mimic gradient descent
\citep{vonoswald2023transformers}; single-layer linear attention provably
converges to one step of preconditioned gradient descent
\citep{ahn2023transformers,zhang2024trained}. Linear looped transformers
learn multi-step preconditioned gradient descent
\citep{gatmiry2024looped}, with polynomially many in-context examples
\citep{chen2025bypassing}, and with layer normalization they learn the
power method \citep{wu2026power}. On graph connectivity,
\citet{ye2026connectivity} showed that a simplified transformer learns the
matrix-power algorithm only when most training graphs are within its
capacity. As in these works, symmetry reduces our full matrix dynamics to
a few parameters; unlike them, our main object is the trained block run
beyond its training depth on harder test inputs.

\paragraph{Length and size generalization.}
Transformers often fail on inputs longer than those seen in training
\citep{anil2022exploring}, and are conjectured to length-generalize when
the task has a short RASP-L program \citep{zhou2024algorithms}. For graph
neural networks, extrapolation succeeds when the architecture encodes the
target's nonlinearity \citep{xu2021neural} and can fail when local
structure changes with graph size \citep{yehudai2021local}; neural
algorithmic reasoning evaluates learned graph algorithms on larger graphs
\citep{velickovic2020neural,velickovic2022clrs}. In our setting the
architecture is fixed and extrapolation comes only from extra loops; we
certify how much harder, in component size and spectral gap, a test graph
can be as a function of the training depth and distribution.

\section{Problem Setting}
\label{sec:setting}

\begin{figure}[t]
\centering
\includegraphics[width=\textwidth]{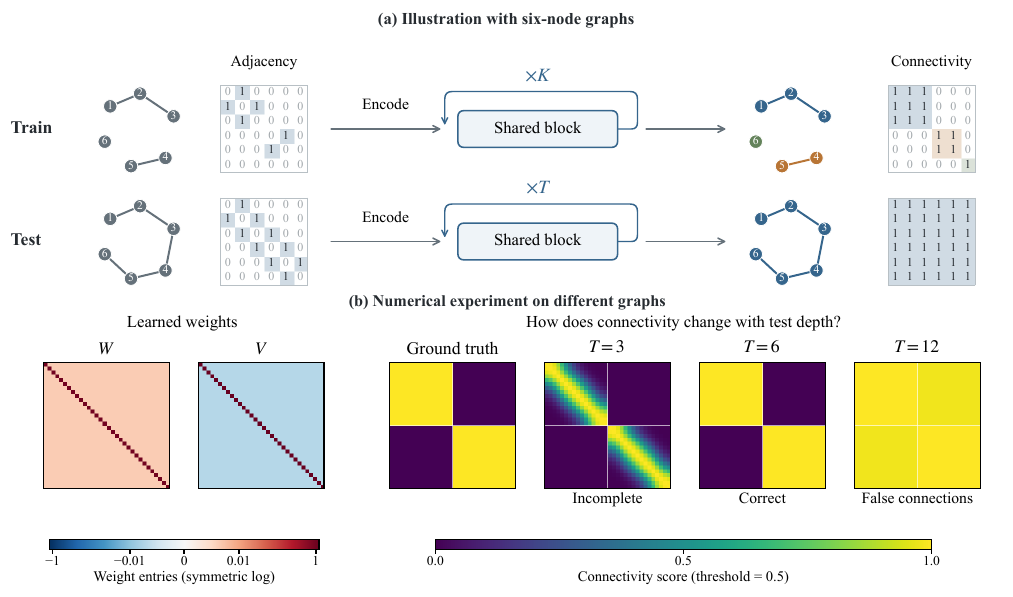}
\caption{\textbf{Depth extrapolation on graph connectivity.} \textbf{Top:} Training and inference with a shared block. \textbf{Bottom:} Learned model weights and test-time predictions across iterations. We train a shared block ($K=3$ loops) on eight disjoint 4-vertex paths and evaluate extrapolation performance on two disjoint 16-vertex paths. $T=6$ test loops perfectly recovers all test ground-truth labels, whereas insufficient or excess test iterations lead to under- or over-prediction of pairwise connectivity.}
\label{fig:depth-overview}
\end{figure}

We study whether a looped transformer trained with $K$ loops can solve
harder graphs by using more loops at test time. We train one shared
attention block on graph connectivity and keep its weights fixed during
inference. This task lets us relate the number of loops to the amount of
information propagation required for exact recovery
\citep{ye2026connectivity}.

\paragraph{Graph connectivity task.}
Let $\mathcal G_N$ be the set of simple, undirected graphs on the
labeled vertex set $[N]$. Given $G\in\mathcal G_N$, the model predicts
the binary matrix $Y_G\in\{0,1\}^{N\times N}$, where
$(Y_G)_{ij}=1$ if and only if vertices $i$ and $j$ are connected by
a path, including the zero-length path when $i=j$.
For example, edges $(1,2)$ and $(2,3)$ imply the positive label
$(Y_G)_{13}=1$, even without edge $(1,3)$. Pairs in different
components have label zero. 

\paragraph{Input representation and supervision.}
Let $A_G$ be the adjacency matrix,
$L_G=\operatorname{diag}(A_G\mathbf1)-A_G$ its Laplacian, and
$d_{\max}(G)$ the maximum degree. A fixed input map produces
$N$ node tokens of width $N$:
\begin{equation}
\label{eq:input}
H_0=P_G:=I-\frac{L_G}{2\max\{1,d_{\max}(G)\}}.
\end{equation}
Row $i$ encodes vertex $i$ and its neighbors; there is no learned
input embedding. If $C(i)$ is the component containing $i$, the
supervised target is $(\Pi_G)_{ij}=(Y_G)_{ij}/|C(i)|$.
Thus, $\Pi_G$ has the same support as $Y_G$ and
$P_G^m\to\Pi_G$ (Appendix~\ref{app:proof:prop:spectrum}).
We use $P_G$ as the input and $\Pi_G$ as the training target.

\paragraph{Looped self-attention model.}
The hidden state $H_t\in\mathbb R^{N\times N}$ evolves through
one attention block with parameters $\theta=(W,V)$ shared across
all loops:
\begin{equation}
\label{eq:model}
H_{t+1}=\Phi_\theta(H_t)
:=\mathcal N\bigl((H_tWH_t^\top)H_tV\bigr).
\end{equation}
Here $W,V\in\mathbb R^{N\times N}$ parameterize the bilinear
attention scores and the value projection. We use linear attention
without softmax, a residual connection, or a feed-forward sublayer.
The map $\mathcal N(Z)=\operatorname{diag}(Z\mathbf1)^{-1}Z$
normalizes each row of the block output to sum to one. The positive
parameter regime below ensures that these
row sums are nonzero. For $G\sim\mathcal D_{\rm tr}$, training
uses $K\ge1$ loops and minimizes the terminal population loss
$\mathcal L_K(\theta)=\tfrac12\mathbb E_G\|H_K-\Pi_G\|_F^2$;
intermediate hidden states receive no separate supervision.

\paragraph{Inference and depth generalization.}
At test time, we freeze $\theta$ and run the block for
$T\ge0$ loops. A fixed readout with threshold $\tau\in(0,1)$
produces
\begin{equation}
\label{eq:decoder}
\widehat Y_{ij}(T)=\mathbf1\!\left\{
\frac{(H_T)_{ij}}{\sqrt{(H_T)_{ii}(H_T)_{jj}}}>\tau
\right\}.
\end{equation}
Success means \emph{exact match}, $\widehat Y(T)=Y_G$, on all
vertex pairs. 
We normalize the output before thresholding so that the model remains scale-invariant to component size, offsetting the $1/n_C$ target scaling assigned to node pairs in a component of size $n_C$ during training.
For each problem instance, the input and target use the same $N$ vertices, while $\mathcal D_{\rm te}$ may differ from
$\mathcal D_{\rm tr}$ in graph size, component size, and connectivity structure.
Depth extrapolation uses $T>K$ to solve harder test instances with
no additional parameters or test-time optimization.
Figure~\ref{fig:depth-overview} shows this training and inference
procedure, the learned matrices, and the unthresholded readout on
longer test paths.

\section{Training Dynamics}
\label{sec:training-dynamics}

\paragraph{One effective parameter.}
Let $E=\mathbf 1\mathbf 1^\top/N$ and $F=I-E$. For weights
$W=aF+bE$ and $V=cF+dE$ with positive coefficients, the normalized
output depends on the weights only through the ratio $\gamma=ac/(bd)$:
\begin{equation}
\label{eq:forward}
H_t=E+\gamma^{(3^t-1)/2}\bigl(P_G^{3^t}-E\bigr).
\end{equation}
Each loop cubes the graph operator, tripling the number of graph
propagation steps. The factor $\gamma$ controls the magnitude of the graph-dependent component relative to the uniform matrix $E$.
When $\gamma=1$, the model computes $P_G^{3^t}$. When $\gamma<1$,
looping also moves the output toward $E$.
Under the symmetry and initialization conditions below, population
training preserves this weight structure
(Appendix~\ref{app:setting-training}).

Write $c_G$ for the number of components, and define
$\kappa=\mathbb E_G(c_G-1)$ and
$\delta_K=\mathbb E_G\|P_G^{3^K}-\Pi_G\|_F^2$.
These measure, respectively, the expected number of components beyond one in the training distribution
and the error remaining after $3^K$ graph steps. The loss is a
quadratic in $\gamma^{(3^K-1)/2}$, so training selects a single
optimal ratio.

\begin{theorem}[Population training learns the effective ratio]
\label{thm:gf}
Suppose $G\sim\mathcal D_{\rm tr}$ implies
$\pi(G)\sim\mathcal D_{\rm tr}$ for every permutation $\pi$
of $[N]$, and $\kappa>0$. Initialize $W_0=a_0F+b_0E$ and $V_0=c_0F+d_0E$
with $a_0,b_0,c_0,d_0>0$. Population matrix gradient flow converges
to a positive stationary pair with
\begin{equation}
\label{eq:opt}
\gamma_*=
\left(1+\frac{\delta_K}{\kappa}\right)^{-2/(3^K-1)}.
\end{equation}
For fixed $K$, there are constants $\mu>0$ and $C\ge0$ such that,
at optimization time $r\ge0$,
\begin{equation}
\label{eq:gf-rates}
\begin{aligned}
\mathcal L_K(r)-\mathcal L_*
&\le e^{-2\mu r}\bigl(\mathcal L_K(0)-\mathcal L_*\bigr),\\
|\gamma(r)-\gamma_*|&\le Ce^{-\mu r},
\end{aligned}
\end{equation}
where $\mathcal L_*=\kappa\delta_K/[2(\kappa+\delta_K)]$ is the
minimum over this positive parameter family.
\end{theorem}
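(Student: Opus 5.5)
The plan is to reduce the matrix flow to the four-coefficient family, compute the loss in closed form, and then analyze a one-dimensional flow in $\gamma$.

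\textbf{Step 1: the flow stays in the symmetric family.} For a permutation matrix $\Pi_\pi$, the model is equivariant: replacing $(G,W,V)$ by $(\pi(G),\Pi_\pi W\Pi_\pi^\top,\Pi_\pi V\Pi_\pi^\top)$ conjugates every $H_t$. This holds because $P_{\pi(G)}=\Pi_\pi P_G\Pi_\pi^\top$, the target transforms the same way, and $\mathcal N$ commutes with simultaneous row and column permutation. Exchangeability of $\mathcal D_{\rm tr}$ then gives $\mathcal L_K(\Pi_\pi W\Pi_\pi^\top,\Pi_\pi V\Pi_\pi^\top)=\mathcal L_K(W,V)$. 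Hence the gradient is equivariant as well. At any $(W,V)$ commuting with all permutation matrices, the gradient also commutes with them. The commutant of the permutation representation is $\mathrm{span}\{I,\mathbf 1\mathbf 1^\top\}=\mathrm{span}\{F,E\}$. By uniqueness of ODE solutions, the flow started at $W_0=a_0F+b_0E$, $V_0=c_0F+d_0E$ stays in this family as long as the coefficients stay positive, since positivity keeps the row sums nonzero. With $\|F\|_F^2=N-1$ and $\|E\|_F^2=1$, the matrix flow becomes
\[
(N-1)\dot a=-\partial_a\mathcal L,\qquad \dot b=-\partial_b\mathcal L,
\]
and similarly for $c,d$.

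\textbf{Step 2: closed-form loss.} By \eqref{eq:forward}, with $m=3^K$ and $s=\gamma^{(m-1)/2}$, we have
\[
H_K-\Pi_G=(s-1)(\Pi_G-E)+s(P_G^m-\Pi_G).
\]
The matrix $\Pi_G-E$ lies in the span of the component-indicator outer products. Each row and column of $P_G^m-\Pi_G$ is orthogonal to those indicators, because $P_G$ is symmetric with $P_G\mathbf 1_C=\mathbf 1_C$. So the two terms are Frobenius-orthogonal, and $\|\Pi_G-E\|_F^2=c_G-1$. Taking expectations,
\[
\mathcal L_K=f(s):=\tfrac12\bigl[(s-1)^2\kappa+s^2\delta_K\bigr].
\]
This is strictly convex, with minimizer $s_*=\kappa/(\kappa+\delta_K)$ and minimum $\kappa\delta_K/[2(\kappa+\delta_K)]$. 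This yields \eqref{eq:opt} and $\mathcal L_*$; here $\kappa>0$ guarantees $s_*>0$.

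\textbf{Step 3: reduced dynamics, invariants, and positivity.} Write $g(\gamma)=f(\gamma^{(m-1)/2})$. Because $\gamma=ac/(bd)$, the chain rule gives $\partial_a\mathcal L=g'(\gamma)\gamma/a$, with opposite sign for $b$ and $d$. Therefore
\[
\tfrac{d}{dt}\bigl[(N-1)a^2\bigr]=\tfrac{d}{dt}\bigl[(N-1)c^2\bigr]=-\tfrac{d}{dt}b^2=-\tfrac{d}{dt}d^2=-2g'(\gamma)\gamma.
\]
This gives the conserved quantities $(N-1)a^2+b^2$, $(N-1)a^2-(N-1)c^2$, and so on, so all coefficients stay bounded. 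The logarithmic derivative of $\gamma$ is
\[
\tfrac{d}{dt}\log\gamma=-g'(\gamma)\gamma\,Q,\qquad Q=\tfrac{2}{(N-1)a^2}+\tfrac1{b^2}+\tfrac1{d^2}>0.
\]
Since $g$ is unimodal in $\gamma$ with its unique critical point at $\gamma_*$, $\gamma(r)$ moves monotonically toward $\gamma_*$ and stays between $\gamma(0)$ and $\gamma_*$.

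Positivity is the step I expect to be the main obstacle. Suppose, for instance, that $\gamma<\gamma_*$. Then $a,c$ increase and $b,d$ decrease, but $bd=ac/\gamma\ge a_0c_0/\gamma_*>0$. Combined with the conservation laws, this bounds $b$ and $d$ away from zero. The case $\gamma>\gamma_*$ is symmetric, with $ac=\gamma bd\ge\gamma_* b_0d_0$. Thus the trajectory lives in a compact set of strictly positive coefficients, it exists globally, and $Q$ is bounded between positive constants.

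\textbf{Step 4: rates.} Along the flow,
\[
\tfrac{d}{dt}(\mathcal L-\mathcal L_*)=-\|\nabla\mathcal L\|^2\le -g'(\gamma)^2\gamma^2 Q_{\min}.
\]
On the compact $\gamma$-interval bounded away from $0$, the map $\gamma\mapsto s$ has derivative bounded above and below. Since $f$ is strongly convex in $s$, this gives the Polyak--Łojasiewicz inequality $g'(\gamma)^2\ge c\,(g(\gamma)-\mathcal L_*)$. Grönwall then yields the first bound in \eqref{eq:gf-rates} with some $\mu>0$. The second bound follows from quadratic growth, $g(\gamma)-\mathcal L_*\ge c'(\gamma-\gamma_*)^2$ on the same interval. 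Convergence of the coefficients themselves to a positive stationary pair follows from the conservation laws: given the limit of $\gamma$, these laws determine $(a,b,c,d)$, and the exponential rate makes $\int|\dot a|\,dr$ finite.
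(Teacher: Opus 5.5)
Your proposal is correct and follows essentially the same route as the paper: $E$--$F$ invariance, the convex quadratic loss in $\gamma^{e_K}$, confinement of $\gamma$ between $\gamma_0$ and $\gamma_*$, and the conserved pair sums to keep the trajectory in a compact positive set (your monotonicity argument for this step is a slightly more direct substitute for the paper's contradiction argument), followed by a gradient-domination inequality, Gr\"onwall, and a mean-value step for $\gamma$. One harmless slip: the factor in $\tfrac{d}{dt}\log\gamma$ should be $Q=\tfrac{1}{(N-1)a^2}+\tfrac{1}{(N-1)c^2}+\tfrac{1}{b^2}+\tfrac{1}{d^2}$, since $a\neq c$ in general; your argument only needs $Q>0$ with a positive lower bound on the compact set, which still holds.
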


At the training depth, the coefficient on the graph-dependent component is
$\gamma_*^{e_K}=\kappa/(\kappa+\delta_K)$, 
where $e_K=(3^K-1)/2$. 
The graph-dependent component distinguishes connected components, but also carries the propagation error
$P_G^{3^K}-\Pi_G$. 
Thus, training balances preserving graph structure against suppressing the propagation error $P_G^{3^K}-\Pi_G$.
As the number of training loops grows, the propagation error
$\delta_K$ decreases and $\gamma_*$ approaches one
(Appendix~\ref{app:proof:prop:monotonicity}). The learned block therefore attenuates the graph-dependent component more slowly over subsequent test loops.

\paragraph{Why training converges.}
Relabeling the vertices leaves the population loss unchanged, and symmetric initialization therefore keeps the gradient flow in $\operatorname{span}\{E,F\}$.
Within this family, the loss is a convex quadratic in
$q=\gamma^{e_K}$, with minimizer $q_*=\kappa/(\kappa+\delta_K)$.
A gradient-domination bound gives the exponential convergence rate
(Appendix~\ref{app:proof:thm:gf}). Gradient descent reaches the same
limit with a sufficiently small constant learning rate
(Appendix~\ref{app:proof:thm:gd}).
Figure~\ref{fig:population-convergence} illustrates this convergence
from four positive initializations.

\paragraph{The role of training depth.}
More training loops reduce the error at convergence, but the time needed
to reach this solution depends on initialization. 
When the effective ratio is below one, i.e.,
$\gamma_0<1$ (or once $\gamma$ has progressed toward $\gamma^* < 1$ enough), 
the factor $\gamma_0^{e_K}$ becomes very small as $K$
grows and suppresses the gradient and slowing optimization. 
Thus, deeper training improves the asymptotic solution but can require substantially more computation to reach it.
Section~\ref{sec:finite-compute} brings this optimization effect
together with the extrapolation guarantees to discuss how to choose $K$.

\begin{figure}[t]
\centering
\includegraphics[width=0.85\textwidth]{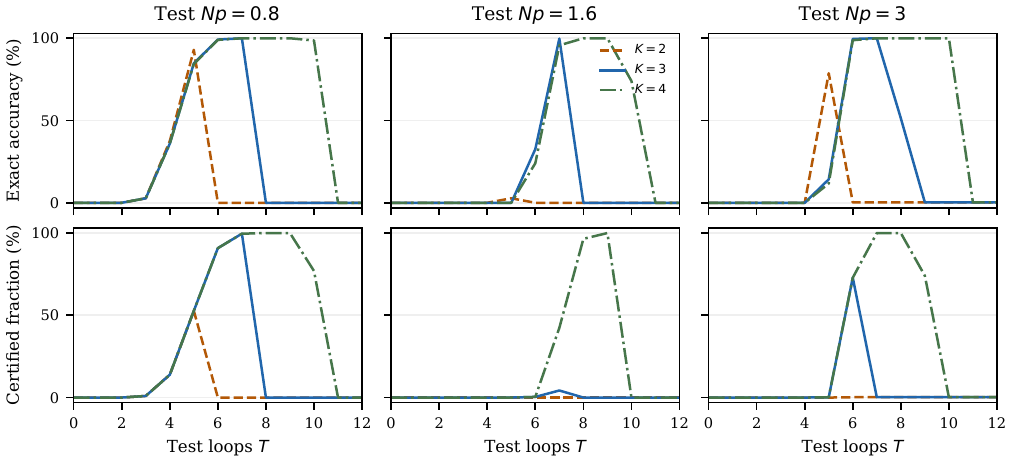}
\caption{\textbf{Test-depth window at population optimum.}
We train on 8192 ER graphs with $N=128$ and $Np_{\rm tr}=0.8$
at depths $K=2,3,4$, and evaluate at varying test depths and densities.
\textbf{Top:} exact graph accuracy averaged over test data. \textbf{Bottom:} the fraction of test graphs
whose certified test-depth window in Proposition~\ref{prop:test-window} contains each
$T$. The certified fraction lower-bounds exact accuracy on the same test samples.}
\label{fig:ood-window-families}
\end{figure}

\section{Extrapolation}
\label{sec:extrapolation}

We now investigate test-time scaling via looping with the model obtained at
the population optimum. We first characterize a range of test depths that
gives exact recovery and show how training depth determines which harder
graphs this range covers (Section~\ref{sec:test-depth}). We then derive
asymptotic generalization guarantees and instantiate them on two common
graph families (Section~\ref{sec:training-depth}). Finally, we provide
empirical insight into how to choose the training loop count for better
generalization (Section~\ref{sec:finite-compute}).

\subsection{Extra loops and harder test graphs}
\label{sec:test-depth}

We first characterize, for a fixed test graph, which test depths are
guaranteed to recover its connectivity.

\begin{proposition}[Test-depth window for a fixed graph]
\label{prop:test-window}
At the population optimum of Theorem~\ref{thm:gf}, let $G$
be a test graph with largest
component size $s_{\max}$ and largest non-unit eigenvalue
$\rho\in(0,1)$ of $P_G$. The learned block recovers every connectivity
label of $G$ at every integer $T\ge0$ satisfying
\begin{equation}
\label{eq:test-window-raw}
\begin{aligned}
\log_3\!\left(
\frac{
\log[s_{\max}(1+\tau)/(1-\tau)]
}{
-\log\rho
}
\right)&<T,\\
T&<T_{\max}(G;K),
\end{aligned}
\end{equation}
where
\begin{equation}
\label{eq:test-upper}
\begin{aligned}
T_{\max}(G;K):=\log_3\!\Bigg[1
&+\frac{3^K-1}{\log(1+\delta_K/\kappa)}\\
&\!\cdot
\log\!\left(
1+\frac{\tau N}{(1-\tau)s_{\max}}
\right)
\Bigg].
\end{aligned}
\end{equation}
We set $T_{\max}=+\infty$ when $\delta_K=0$. For a connected test
graph, the upper restriction is unnecessary.
\end{proposition}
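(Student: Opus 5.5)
The plan is to use the closed form \eqref{eq:forward} at the optimum $\gamma_*$ of Theorem~\ref{thm:gf}, split $P_G^{3^T}$ spectrally into its limit $\Pi_G$ plus a remainder, and check the two kinds of pairs separately. Pairs inside a component will give the lower endpoint of the window, and pairs across components will give $T_{\max}$.

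\textbf{Setup.} Write $m=3^T$, $e_T=(3^T-1)/2$ and $q=\gamma_*^{e_T}$. By \eqref{eq:opt},
\[
q=(1+\delta_K/\kappa)^{-(3^T-1)/(3^K-1)}\in(0,1],
\]
with $q=1$ when $\delta_K=0$. Then
\[
H_T=(1-q)E+q\,P_G^{m}.
\]
$P_G$ is symmetric, since $L_G$ is. Its eigenvalues lie in $[0,1]$, because the eigenvalues of $L_G$ lie in $[0,2d_{\max}]$. $P_G$ is also block diagonal over the connected components. The eigenvalue-$1$ eigenspace is spanned by the component indicator vectors, so its projector is $\Pi_G$. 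Hence
\[
P_G^m=\Pi_G+R,\qquad R=\sum_{\lambda\neq1}\lambda^m vv^\top .
\]
Three properties of $R$ are used below.
\begin{itemize}
\item $R$ is positive semidefinite, because every $\lambda\ge0$.
\item $\|R\|_2\le\rho^m$.
\item $R$ is block diagonal, so $R_{ij}=0$ whenever $i,j$ lie in different components.
\end{itemize}
Put $x=(1-q)/N$ and $\varepsilon=\rho^m$.

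\textbf{Cross-component pairs.} For $i,j$ in different components, $(H_T)_{ij}=x$ exactly. For each diagonal entry,
\[
(H_T)_{ii}=x+q/s_i+qR_{ii}\ge x+q/s_{\max},
\]
using $R_{ii}\ge0$. So $\widehat Y_{ij}=0$ holds once $x<\tau(x+q/s_{\max})$. This is equivalent to
\[
(1-q)/q<\frac{\tau N}{(1-\tau)s_{\max}},
\qquad\text{i.e.}\qquad
\log(1/q)<\log\!\Big(1+\frac{\tau N}{(1-\tau)s_{\max}}\Big).
\]
Substituting $\log(1/q)=\frac{3^T-1}{3^K-1}\log(1+\delta_K/\kappa)$ and solving for $T$ gives exactly $T<T_{\max}(G;K)$ as defined in \eqref{eq:test-upper}. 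If $G$ is connected there are no such pairs, so no upper restriction is needed. If $\delta_K=0$, then $q=1$ and the condition holds for all $T$.

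\textbf{Same-component pairs.} Let $i,j$ lie in a component of size $s$. Then $(H_T)_{ij}=x+q/s+qR_{ij}$. Since $R\succeq0$, we have $|R_{ij}|\le\sqrt{R_{ii}R_{jj}}$, and every diagonal entry satisfies $R_{kk}\le\varepsilon$. By AM--GM,
\[
\sqrt{(H_T)_{ii}(H_T)_{jj}}\le x+q/s+q\varepsilon,
\qquad\text{while}\qquad
(H_T)_{ij}\ge x+q/s-q\varepsilon .
\]
The normalized score therefore exceeds $\tau$ whenever
\[
(1-\tau)(x+q/s)>(1+\tau)q\varepsilon .
\]
Since $x\ge0$ and $s\le s_{\max}$, a sufficient condition is
\[
\rho^{3^T}<\frac{1-\tau}{(1+\tau)s_{\max}} .
\]
Taking logarithms twice turns this into the lower inequality in \eqref{eq:test-window-raw}. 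The diagonal pairs $i=j$ score exactly $1>\tau$.

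\textbf{Expected obstacle.} The delicate step is the cross-component bound. A crude entrywise estimate $|R_{ij}|\le\rho^m$ would bring $\varepsilon$ into that condition and would not reproduce the clean endpoint $T_{\max}$. The argument therefore relies on two structural facts: $R$ vanishes exactly off the diagonal blocks, and $R$ is positive semidefinite, so the diagonal entries of $H_T$ only help. I would verify carefully that $P_G\succeq0$, which follows from $\lambda_{\max}(L_G)\le2d_{\max}$ and is what makes $R\succeq0$.
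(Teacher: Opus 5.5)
Your proposal is correct and follows essentially the same argument as the paper. Both write $H_T$ as a positive multiple of $P_G^{3^T}$ plus a uniform term, use that $R=P_G^{3^T}-\Pi_G$ is positive semidefinite, block diagonal over components, and of operator norm at most $\rho^{3^T}$, and read off the lower endpoint from same-component pairs and $T_{\max}$ from cross-component pairs via $(1-q)/q<\tau N/[(1-\tau)s_{\max}]$. The only cosmetic difference is that you bound $|R_{ij}|$ through $\sqrt{R_{ii}R_{jj}}$ and AM--GM, whereas the paper uses $|R_{ij}|\le\|R\|_{\mathrm{op}}\le\rho^{3^T}$ directly and bounds each diagonal factor separately.
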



\emph{Proof Idea.} The two endpoints arise from the two types of
vertex pairs. For connected pairs, sufficient propagation makes the
normalized readout exceed $\tau$, yielding the lower endpoint. This
endpoint depends only on the test graph and readout threshold and
measures how many loops are needed to propagate connectivity, which
motivates our definition of graph hardness below. For disconnected
pairs, the accumulated uniform contribution eventually pushes the
readout above $\tau$ when $\delta_K>0$, yielding the upper endpoint.
Thus, excessive iteration can create false-positive connections, and
the upper endpoint reflects the bias learned during training (Appendix~\ref{app:proof:prop:test-window}).

Figure~\ref{fig:ood-window-families} provides an empirical illustration. We trained looped transformers on 8192 graphs sampled from the same Erd\H{o}s--R\'enyi (ER) distribution $G(128, 0.8)$ (Definition see Example~\ref{ex:er-generalization}) and evaluated on 512 independent test graphs sampled from different ER distributions while varying test loop counts. We compare the empirically observed successful test-depth windows with the theorem's certified windows, finding that the latter are conservative and therefore provide lower bounds on the true successful range. 


\begin{definition}[Graph hardness]
\label{def:hardness}
For a graph $G$ with largest component size $s_{\max}$ and largest
non-unit eigenvalue $\rho\in(0,1)$, define
\begin{equation}
\label{eq:hardness}
\mathsf{Hard}(G)
:=
\log_3\!\left(
\frac{
\log[s_{\max}(1+\tau)/(1-\tau)]
}{
-\log\rho
}
\right).
\end{equation}
Set $\mathsf{Hard}(G)=-\infty$ when $P_G=\Pi_G$.
\end{definition}

The hardness has a direct graph interpretation. A larger
$s_{\max}$ means that connectivity may need to be propagated across a
larger component, while $\rho$ controls how quickly this information
spreads. Matrix powering expands the effective graph distance seen by
the model: path-like or poorly connected components can have $\rho$ close to one and therefore require many loops, whereas in the limiting case $\rho=0$ where all components are cliques, the
input representation already reveals the connected components and no
extra loop is needed.

With this notation, Proposition~\ref{prop:test-window} becomes simply
\begin{equation}
\label{eq:test-window}
\mathsf{Hard}(G)
<
T
<
T_{\max}(G;K).
\end{equation}

Figure~\ref{fig:ood-window-families} compares test-time looping after
training with different loop counts on the same ER population.
For each $K=2,3,4$, we freeze the trained weights and vary the test-loop
count $T$. Accuracy first rises as extra loops
propagate information through each component, then falls when further
loops create false connections. More training loops widen the useful
range of test depths. The lower row shows the fraction of test graphs
covered by the certified test-depth windows. Appendix~\ref{app:er-protocol}
gives the protocol and a longer-depth comparison.

\subsection{Generalization from shallow to deep tasks}
\label{sec:training-depth}

We next ask how much harder a test graph can be than the training
graphs. We first show that, under a spectral regularity assumption on the training
distribution, the trained looped transformer can provably generalize to
harder instances in asymptotic sense
(Appendix~\ref{app:uniform-extrapolation}).

\begin{theorem}[Generalization from shallow to deep graphs]
\label{thm:hardness-gap}
Fix $\tau\in(0,1)$. Under the assumptions of
Theorem~\ref{thm:gf}, consider a training family satisfying the spectral regularity condition
defined in Appendix~\ref{app:hardness-gap-proof}, with maximum training
hardness \(H\) and training depth \(K\).
If
\[
H\to\infty,
\qquad
H3^{K-H}\to\infty,
\]
then there exists a single integer inference depth $T>K$, determined
only by the training problem, such that
\begin{equation}
\label{eq:hardness-scaling}
T-K=\Theta\!\left(H3^{K-H}\right),
\end{equation}
and the population-optimal block solves every test graph on the same
$N$ vertices satisfying
\[
\mathsf{Hard}(G)<T.
\]
The constants in the asymptotic rate are uniform over the family.
\end{theorem}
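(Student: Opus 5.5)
The approach is to combine the test-depth window of Proposition~\ref{prop:test-window} with an explicit lower bound on $T_{\max}(G;K)$ that uses only training quantities. By \eqref{eq:test-window}, it suffices to produce an integer $T>K$, depending only on $(K,H,\tau,N)$ and the training family, such that $T<T_{\max}(G;K)$ for every admissible test graph $G$. Every $G$ with $\mathsf{Hard}(G)<T$ then lies in its window and is solved exactly. The only test-graph dependence in $T_{\max}$ is through $s_{\max}\le N$. Since $\log(1+\tau N/((1-\tau)s_{\max}))\ge \log(1+\tau/(1-\tau))=-\log(1-\tau)$, we obtain the uniform bound
\[
T_{\max}(G;K)\ \ge\ \log_3\!\Bigl[1+\frac{(3^K-1)\,(-\log(1-\tau))}{\log(1+\delta_K/\kappa)}\Bigr].
\]
We will therefore take $T$ to be the largest integer strictly below this right-hand side, and then show that $T-K=\Theta(H3^{K-H})$.

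The main work is to estimate $\delta_K/\kappa$ from the training hardness. Diagonalize $P_G$ (it is symmetric with spectrum in $[0,1]$ by the choice of normalization). Then $\delta_K=\mathbb E_G\sum_{\lambda\neq 1}\lambda^{2\cdot 3^K}$, and each term is at most $(N-c_G)\rho_G^{2\cdot3^K}$. Definition~\ref{def:hardness} gives $-\log\rho_G=\log[s_{\max}(1+\tau)/(1-\tau)]\,3^{-\mathsf{Hard}(G)}$. With $\mathsf{Hard}(G)\le H$, this yields
\[
\rho_G^{3^K}\le \bigl(s_{\max}(1+\tau)/(1-\tau)\bigr)^{-3^{K-H}}\le \exp(-c_\tau 3^{K-H}),
\]
with $c_\tau=\log((1+\tau)/(1-\tau))$. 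I expect the spectral regularity condition to supply the matching \emph{lower} bound: a constant fraction of the training mass, weighted against $\kappa$, sits at hardness $\approx H$ with bounded multiplicity of near-top eigenvalues and $s_{\max}$ comparable to a scale $e^{\Theta(H)}$ or similar. This would give $\log(\delta_K/\kappa)=-\Theta(3^{K-H}\cdot\log s)$, or, in the cleanest reading of the regularity assumption, $-\log(\delta_K/\kappa)=\Theta(H\,3^{K-H})$. The factor $H$ should come from $\log s_{\max}$ being of order $H$ (or from the $3^{H}$ relation between $-\log\rho$ and hardness) along the family. Since $H3^{K-H}\to\infty$, we get $\delta_K/\kappa\to0$, so $\log(1+\delta_K/\kappa)\sim\delta_K/\kappa$.

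Substituting this estimate, the bound becomes
\[
\log_3\bigl[1+(3^K-1)c'\kappa/\delta_K\bigr]=K+\log_3(\kappa/\delta_K)+O(1)=K+\Theta(H3^{K-H}),
\]
where the $O(1)$ is absorbed because the main term diverges. Rounding down to an integer costs at most one, so $T>K$ and $T-K=\Theta(H3^{K-H})$. The constants are uniform because they depend only on $\tau$ and the regularity constants. The main obstacle is the two-sided estimate of $\delta_K/\kappa$: the upper bound follows from the hardness cap, but the lower bound, which is needed for the $\Theta$ and not merely the $\Omega$, must come from the regularity condition. The upper bound also has to control the prefactor $N-c_G$ and the division by $\kappa$ without losing more than a lower-order term relative to $H3^{K-H}$. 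I would first try to state the regularity condition so that it directly asserts $\delta_K/\kappa=\exp(-\Theta(H3^{K-H}))$ up to polynomial factors, and then verify that these factors are $o(H3^{K-H})$ in the exponent.
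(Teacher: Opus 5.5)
Your proposal is correct and takes essentially the same route as the paper: you bound $T_{\max}(G;K)$ below uniformly by $\underline T_K$ using $s_{\max}\le N$, take $T=\lceil\underline T_K\rceil-1$, and reduce everything to showing $\log_3(\kappa/\delta_K)=\Theta(H3^{K-H})$. The two-sided estimate you hoped for is exactly what the paper's regularity definition assumes: with $r=-\log\rho_{\rm tr}$ and $R=H3^{K-H}$, it posits $a_-H\le r3^H\le a_+H$ (hence $a_-R\le r3^K\le a_+R$) and $z_-\le e^{2r3^K}\delta_K/\kappa\le z_+$, so the $N/\kappa$ prefactor is absorbed into constants. Your per-graph hardness-cap bound is therefore unnecessary, and on its own it would give only the weaker exponent $3^{K-H}$ minus a $\log(N/\kappa)$ loss, which is the rate of Theorem~\ref{thm:probabilistic-extrapolation}.
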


The gain is controlled by the gap $K-H$ between training depth and
training hardness. Spectral regularity means, informally, that the eigenmode of $P_G$ with the slowest decay under repeated powering determines both the training hardness and the residual error after $K$ loops; the formal condition is given in Appendix~\ref{app:hardness-gap-proof}. Consequently, the number of
additional inference loops guaranteed beyond the training depth grows
as $\Theta(H3^{K-H})$. In particular, when $K-H=O(1)$, the guaranteed additional depth is $\Theta(H)$. We then provide an example to show this gain in a specific graph family.

\begin{figure}[!t]
\centering
\includegraphics[width=\textwidth]{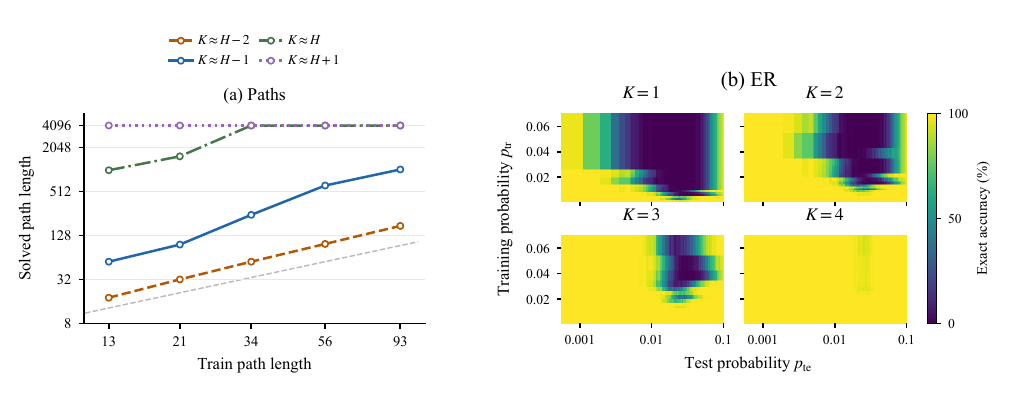}
\caption{\textbf{Transfer to longer paths and denser ER graphs.}
We train at the population optimum on disjoint short paths
or ER graphs, and evaluate transfer to longer paths or denser ER graphs.
\textbf{(a)} Solved path length at fixed $N=8192$, with training loops $K$ tracking
training hardness $H$. Both axes use log scales.
\textbf{(b)} Maximum exact accuracy on 128 ER test graphs per density at
$N=64$, using a common $T\in\{0,\ldots,18\}$ in each cell.}
\label{fig:ood-windows}
\end{figure}

\begin{example}[From short paths to long paths]
\label{ex:path-generalization}
We denote by $P_L$ a path with $L$ vertices. For $N\ge2s$, train
at the population optimum on uniform relabelings of a graph with
$\lfloor N/s\rfloor$ copies of $P_s$ and isolated leftover vertices,
under the initialization assumptions of Theorem~\ref{thm:gf}.
Test graphs have $\lfloor N/L\rfloor$ copies of $P_L$ and isolated
leftover vertices, on the same $N$ vertices.

As $s\to\infty$, suppose
$K-\mathsf{Hard}_{\rm tr}\to\Delta$ for a constant $\Delta$. By
the refinement of Theorem~\ref{thm:hardness-gap} proved in
Appendix~\ref{app:spectral-refinement}, there exists one test depth
$T>K$, depending only on the training problem, at which the trained
block solves every feasible test size $L\le N$ whenever
$$
L=s^{q+o(1)}, \qquad 1<q<1+3^\Delta.
$$
Thus, a constant gap $K-\mathsf{Hard}_{\rm tr}\to\Delta$ permits
extrapolation from training paths of length $s$ to paths whose length
grows as $s^{q+o(1)}$ for $q<1+3^\Delta$.

Empirically, we find that solvability is monotone in path length:
whenever a model solves $G_L$, it also solves all tested graphs with
shorter paths. In our experiments, the solved set therefore forms a prefix
$L=2,\ldots,L'$. We thus report the hardness of $L'$ as the measured
test-depth cutoff for each training path length.
\end{example}

In addition, we show that training distributions with a tail of harder instances can admit a
related one-sided guarantee, thus making extrapolation gain applicable for several random graph distributions.

\begin{theorem}[High-probability shallow-to-deep generalization]
\label{thm:probabilistic-extrapolation}
Under the assumptions of Theorem~\ref{thm:gf}, fix
$\tau\in(0,1)$ and suppose
\[
\Pr_{\mathcal D_{\rm tr}}
\bigl(\mathsf{Hard}(G)>H\bigr)
\le\varepsilon.
\]
Let
\[
M=
\min\left\{
3^{K-H},
\log\frac1\varepsilon
\right\}.
\]
If $M\to\infty$ and $\log(N/\kappa)=o(M)$, then there exists a
single integer inference depth $T>K$, independent of the test graph,
such that
\[
T-K=\Theta(M),
\]
and the population-optimal block solves every test graph satisfying
\[
\mathsf{Hard}(G)<T.
\]
\end{theorem}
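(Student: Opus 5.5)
Via Proposition~\ref{prop:test-window}, the statement reduces to one job: pick a depth $T$ with $T>K$, $T-K=\Theta(M)$, and $T\le T_{\max}(G;K)$ for every test graph on the $N$ vertices. Every $G$ with $\mathsf{Hard}(G)<T$ then satisfies the full window~\eqref{eq:test-window}. Since $s_{\max}\le N$, the upper endpoint~\eqref{eq:test-upper} is bounded below uniformly by
\[
T_{\max}(G;K)\ \ge\ \log_3\!\Bigl[1+\frac{(3^K-1)\,c_\tau}{\log(1+\delta_K/\kappa)}\Bigr],
\qquad c_\tau:=\log\bigl(1+\tau/(1-\tau)\bigr)>0 .
\]
Using $\log(1+x)\le x$, this is at least $\log_3\bigl[(3^K-1)c_\tau\kappa/\delta_K\bigr]$. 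So everything comes down to an upper bound on $\delta_K/\kappa$ of the form $e^{-\Omega(M)}$ times factors polynomial in $N/\kappa$. That would give $T_{\max}\ge K+\Omega(M)-O(\log(N/\kappa))=K+\Omega(M)$, using the hypothesis $\log(N/\kappa)=o(M)$.

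To bound $\delta_K=\mathbb E_G\|P_G^{3^K}-\Pi_G\|_F^2$, I split on the event $\{\mathsf{Hard}(G)\le H\}$.

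On the complement, which has probability at most $\varepsilon$, I use the crude bound $\|P_G^{m}-\Pi_G\|_F^2\le N$. It holds because $P_G$ is symmetric and doubly stochastic with spectrum in $[0,1]$: the input map guarantees $P_G\succeq 0$, since the Laplacian eigenvalues are at most $2d_{\max}$. The contribution is therefore at most $N\varepsilon=N e^{-\log(1/\varepsilon)}\le Ne^{-M}$.

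On the good event, $\mathsf{Hard}(G)\le H$ unpacks to $\rho^{3^H}\le (1-\tau)/\bigl(s_{\max}(1+\tau)\bigr)<1$. Writing $\rho=e^{-\lambda}$, this gives $\lambda 3^H\ge \log\bigl(s_{\max}(1+\tau)/(1-\tau)\bigr)\ge c'_\tau>0$. Hence
\[
\|P_G^{3^K}-\Pi_G\|_F^2\le N\rho^{2\cdot 3^K}=N\exp\bigl(-2\lambda 3^H\cdot 3^{K-H}\bigr)\le N e^{-2c'_\tau 3^{K-H}}\le Ne^{-2c'_\tau M}.
\]
Together, $\delta_K\le 2N e^{-cM}$ with $c=\min\{1,2c'_\tau\}$. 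Graphs with $P_G=\Pi_G$ contribute zero and cause no trouble.

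It remains to choose $T$ and check both requirements.

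\textbf{Choice of $T$.} Set $T:=K+\lfloor c M/(2\ln 3)\rfloor$. The inequality above gives $T_{\max}\ge K+\bigl(cM-\log(2N/\kappa)+\log c_\tau\bigr)/\ln 3+O(1)$, which exceeds the chosen $T$ for large $M$ by the $o(M)$ hypothesis. Moreover $T>K$ once $M$ is large, and $T-K=\Theta(M)$. The constant $c$ depends only on $\tau$, so $T$ depends only on the training problem.

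\textbf{The $\Theta$ claim.} The statement only asks for the existence of such a $T$, so the upper bound $O(M)$ holds by construction.

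\textbf{Main obstacle.} The delicate step is making the lower bound on $T_{\max}$ uniform over test graphs with arbitrary $s_{\max}$. This is exactly where I replace $s_{\max}$ by $N$ and spend the $\log(N/\kappa)=o(M)$ assumption. I also need to handle the regime where $\delta_K/\kappa$ is not small, for which $\log(1+x)\le x$ still suffices. Finally, I must check that $\kappa>0$, so that the population optimum from Theorem~\ref{thm:gf} is well defined.
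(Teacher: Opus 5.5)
Your proposal is correct and follows essentially the same route as the paper: split $\delta_K$ on the event $\{\mathsf{Hard}(G)\le H\}$ to get $\delta_K\le 2Ne^{-\beta M}$ with $\beta=\min\{2c_{\rm in},1\}$, bound the upper endpoint uniformly via $s_{\max}\le N$ (the paper's $\underline T_K$), absorb $\log(N/\kappa)=o(M)$, and take an integer depth $K+\Theta(M)$ strictly below that common bound. The differences are cosmetic: your $T=K+\lfloor \beta M/(2\ln 3)\rfloor$ versus the paper's $\lfloor K+\beta M/(4\log 3)\rfloor+1$; you should also state the strict inequality $T<T_{\max}(G;K)$ that Proposition~\ref{prop:test-window} requires (your choice gives it), and the trivial case $\delta_K=0$, where $T_{\max}=+\infty$.
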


The two terms in $M$ capture two distinct limitations on extrapolation.
When the training depth exceeds the training hardness, the residual
after training decreases rapidly, allowing the learned block to remain
useful for more additional loops; this yields the factor
$3^{K-H}$. On the other hand, if harder training instances occur with
probability at most $\varepsilon$, their tail probability limits how
far a common test depth can be guaranteed, yielding the factor
$\log(1/\varepsilon)$. The extrapolation gain is therefore controlled
by the smaller of these two quantities. The proof is given in
Appendix~\ref{app:probabilistic-extrapolation}. The next example applies
the same argument to Erdős--Rényi graphs.

\begin{example}[From sparse to denser ER graphs]
\label{ex:er-generalization}
We denote $G(n,p)$ for the Erd\H{o}s--R\'enyi distribution on $n$
labeled vertices, where each possible edge is included
independently with probability $p$.
Fix $\tau,\varepsilon\in(0,1)$ and train to the population optimum
on $G(n,p_{\rm tr})$, where $n\ge3$ and $0<np_{\rm tr}<1$,
under the assumptions of Theorem~\ref{thm:gf}. The finite-size refinement
of Theorem~\ref{thm:probabilistic-extrapolation} in
Appendix~\ref{app:er-generalization} constructs a training-selected depth
$T_K$ and a component-size budget $S_K$. Provided $T_K>K$ and
$S_K>\log(n/\varepsilon)/\alpha(np_{\rm tr})$, the model has
exact-match accuracy at least
$1-\varepsilon$ on every ER test distribution in the range
\begin{equation}
\label{eq:er-example-p-range}
p_{\rm tr}<p_{\rm te}\le\frac1n\alpha^{-1}\!\left(
\frac{\log(n/\varepsilon)}{S_K}\right),
\end{equation}
where $\alpha(c)=c-1-\log c$ and its inverse is on $(0,1)$, yielding generalization to denser graphs. $S_K$ is an integer constant determined by the training distribution,
$K$, $\tau$, and $\varepsilon$, defined in
Appendix~\ref{app:er-generalization}.
Empirically, we train looped transformers on ER graphs drawn from \(G(64,p_{\rm tr})\) for several training probabilities \(p_{\rm tr}\) and evaluate them on independent graphs from \(G(64,p_{\rm te})\) when looping for $T$ times. Figure~\ref{fig:ood-windows}(b) reports the maximum exact accuracy over preselected $T$ grid for each pair \((p_{\rm tr},p_{\rm te})\).
\end{example}

\subsection{How many training loops?}
\label{sec:finite-compute}

More training loops improve the population-optimal extrapolation
guarantee, but need not improve generalization under a fixed training
budget. The practical question is therefore not simply how large $K$
should be, but whether the corresponding iterative rule can be learned
within the available compute.

\paragraph{Deeper unrolling can become harder to optimize.}
Increasing $K$ raises both the cost of each update and the difficulty
of making optimization progress. Figure~\ref{fig:flop-matched} shows
this effect under matched compute. From the same initialization
$\gamma_0\approx0.96$, models with $K\le4$ reach the population optimum,
whereas $K=6,8$ remain near their initial loss despite separately tuned
learning rates. Simply increasing the learning rate does not solve this problem:
Figure~\ref{fig:lr-sweep} shows that they either remain on the plateau
or become unstable.

Thus, although increasing $K$ improves the optimal extrapolation window,
that improvement may be inaccessible under a finite training budget
when deeper unrolling suppresses the training signal.

\begin{figure}[t]
\centering
\begin{minipage}[t]{0.48\linewidth}
\vspace{0pt}
\centering
\begin{minipage}[c][0.74\linewidth][c]{\linewidth}
\centering
\includegraphics[width=\linewidth]{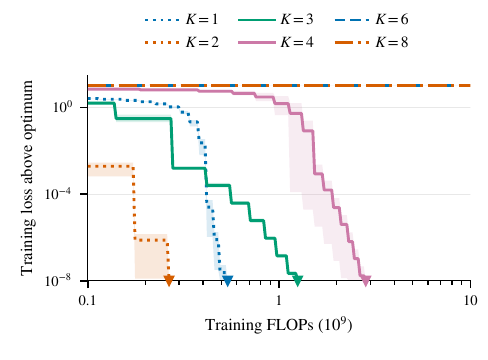}
\end{minipage}
\caption{\textbf{Deeper training can be harder to optimize at matched compute.}
Excess population loss versus equivalent training FLOPs for different
training depths $K$. We use population GD on mixtures of disjoint 4- and 8-vertex
paths with $N=128$, comparing training depths under matched compute. Learning rates are tuned separately for each $K$;
from $\gamma_0\approx0.96$, $K\le4$ converges while $K=6,8$ remain
near the initial plateau.}
\label{fig:flop-matched}
\end{minipage}\hfill
\begin{minipage}[t]{0.48\linewidth}
\vspace{0pt}
\centering
\begin{minipage}[c][0.74\linewidth][c]{\linewidth}
\centering
\includegraphics[width=\linewidth]{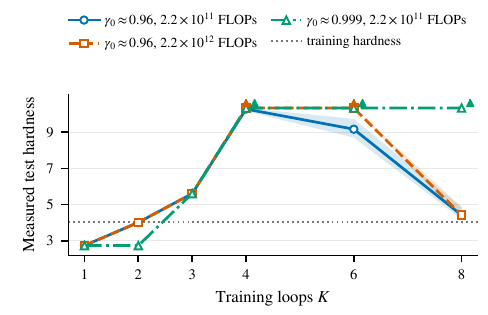}
\end{minipage}
\caption{\textbf{The best training depth depends on compute and initialization.} 
Hardest test-graph hardness reached under matched training budgets.
We train models on disjoint 4- and 8-vertex
paths with $N=128$, and evaluate on path, broom, and cycle graphs.
Moderate $K$ performs best from $\gamma_0\approx0.96$, while
near-identity initialization makes deeper training effective.
}
\label{fig:choose-k}
\end{minipage}
\end{figure}

\paragraph{The best training depth depends on compute and initialization.}
We next evaluate the harder graphs reached after Adam training
under matched budgets. At the smaller budget, $K=4$ gives the strongest
generalization from $\gamma_0\approx0.96$. With ten times more compute,
both $K=4$ and $K=6$ solve the full test catalog, while $K=8$ still
fails to improve over its initialization. More compute therefore
expands the useful range of $K$, but does not automatically make the
deepest training run preferable.

Initialization can change this picture substantially. Starting closer
to $\gamma=1$ allows $K=4,6,8$ to solve the full catalog at the smaller
budget. This agrees with the reduced dynamics: preserving more graph
information initially avoids the severe gradient suppression that can
occur at large $K$. At the same time, near-identity initialization is
already a stronger untrained baseline, so improved test performance
should not be attributed solely to faster optimization.

Therefore, training depth should be chosen jointly with initialization
and compute. While our theory characterizes the extrapolation benefit
after convergence, experiments show that larger $K$ is useful only when
optimization can reach that regime. In practice, candidate loop counts
should be compared at matched compute and selected using harder-task
validation performance rather than population-optimal guarantees alone.

\section{Conclusion \& Limitations}
\label{sec:conclusion}

We studied how training determines the benefit of extra test-time loops
for graph connectivity. Population training learns an effective ratio
that balances component separation against propagation error, creating
a finite window of test depths with exact recovery. A single depth
chosen from the training distribution can solve harder test graphs, with
a guaranteed extrapolation gain that grows exponentially with the gap
between training depth and graph hardness. Experiments validate these
windows and show how compute and initialization affect the choice of
training depth.

Our analysis assumes linear attention, row normalization, population
training, and symmetric initialization. Extending the theory to softmax
attention and finite-sample training is an important direction. The
sufficient test-depth window could also be tightened, as our ER
experiments show successful depths beyond those certified by the theory.
Whether similar behavior arises on other algorithmic tasks remains open.

\section*{Acknowledgements}
This research was supported in part by the NSF CAREER Award 2146492, NSF-Simons AI Institute for Cosmic Origins (CosmicAI), and NSF AI Institute for Foundations of Machine Learning (IFML).
 %

\nocite{wei2022chain,snell2025scaling}
\bibliographystyle{plainnat}
\begingroup
\small
\setlength{\bibsep}{4pt plus 1pt}
\interlinepenalty=10000
\bibliography{references}

@inproceedings{
lan2020albert,
title={ALBERT: A Lite BERT for Self-supervised Learning of Language Representations},
author={Zhenzhong Lan and Mingda Chen and Sebastian Goodman and Kevin Gimpel and Piyush Sharma and Radu Soricut},
booktitle={International Conference on Learning Representations},
year={2020},
url={https://openreview.net/forum?id=H1eA7AEtvS}
}

@article{perez2021attention,
  author  = {Jorge Pérez and Pablo Barceló and Javier Marinkovic},
  title   = {Attention is Turing-Complete},
  journal = {Journal of Machine Learning Research},
  year    = {2021},
  volume  = {22},
  number  = {75},
  pages   = {1--35},
  url     = {http://jmlr.org/papers/v22/20-302.html}
}

@InProceedings{sanford2024transformers,
  title = 	 {Transformers, parallel computation, and logarithmic depth},
  author =       {Sanford, Clayton and Hsu, Daniel and Telgarsky, Matus},
  booktitle = 	 {Proceedings of the 41st International Conference on Machine Learning},
  pages = 	 {43276--43327},
  year = 	 {2024},
  editor = 	 {Salakhutdinov, Ruslan and Kolter, Zico and Heller, Katherine and Weller, Adrian and Oliver, Nuria and Scarlett, Jonathan and Berkenkamp, Felix},
  volume = 	 {235},
  series = 	 {Proceedings of Machine Learning Research},
  month = 	 {21--27 Jul},
  publisher =    {PMLR},
  url = 	 {https://proceedings.mlr.press/v235/sanford24a.html}
}

@inproceedings{garg2022what,
 author = {Garg, Shivam and Tsipras, Dimitris and Liang, Percy S and Valiant, Gregory},
 booktitle = {Advances in Neural Information Processing Systems},
 doi = {10.52202/068431-2217},
 editor = {S. Koyejo and S. Mohamed and A. Agarwal and D. Belgrave and K. Cho and A. Oh},
 pages = {30583--30598},
 publisher = {Curran Associates, Inc.},
 title = {What Can Transformers Learn In-Context? A Case Study of Simple Function Classes},
 url = {https://proceedings.neurips.cc/paper_files/paper/2022/file/c529dba08a146ea8d6cf715ae8930cbe-Paper-Conference.pdf},
 volume = {35},
 year = {2022}
}

@InProceedings{vonoswald2023transformers,
  title = 	 {Transformers Learn In-Context by Gradient Descent},
  author =       {Von Oswald, Johannes and Niklasson, Eyvind and Randazzo, Ettore and Sacramento, Joao and Mordvintsev, Alexander and Zhmoginov, Andrey and Vladymyrov, Max},
  booktitle = 	 {Proceedings of the 40th International Conference on Machine Learning},
  pages = 	 {35151--35174},
  year = 	 {2023},
  editor = 	 {Krause, Andreas and Brunskill, Emma and Cho, Kyunghyun and Engelhardt, Barbara and Sabato, Sivan and Scarlett, Jonathan},
  volume = 	 {202},
  series = 	 {Proceedings of Machine Learning Research},
  month = 	 {23--29 Jul},
  publisher =    {PMLR},
  url = 	 {https://proceedings.mlr.press/v202/von-oswald23a.html}
}

@inproceedings{ahn2023transformers,
 author = {Ahn, Kwangjun and Cheng, Xiang and Daneshmand, Hadi and Sra, Suvrit},
 booktitle = {Advances in Neural Information Processing Systems},
 doi = {10.52202/075280-1977},
 editor = {A. Oh and T. Naumann and A. Globerson and K. Saenko and M. Hardt and S. Levine},
 pages = {45614--45650},
 publisher = {Curran Associates, Inc.},
 title = {Transformers learn to implement preconditioned gradient descent for in-context learning},
 url = {https://proceedings.neurips.cc/paper_files/paper/2023/file/8ed3d610ea4b68e7afb30ea7d01422c6-Paper-Conference.pdf},
 volume = {36},
 year = {2023}
}

@article{zhang2024trained,
  author  = {Ruiqi Zhang and Spencer Frei and Peter L. Bartlett},
  title   = {Trained Transformers Learn Linear Models In-Context},
  journal = {Journal of Machine Learning Research},
  year    = {2024},
  volume  = {25},
  number  = {49},
  pages   = {1--55},
  url     = {http://jmlr.org/papers/v25/23-1042.html}
}

@InProceedings{chen2025bypassing,
  title = 	 {Bypassing the Exponential Dependency: Looped Transformers Efficiently Learn In-context by Multi-step Gradient Descent},
  author =       {Chen, Bo and Li, Xiaoyu and Liang, Yingyu and Shi, Zhenmei and Song, Zhao},
  booktitle = 	 {Proceedings of The 28th International Conference on Artificial Intelligence and Statistics},
  pages = 	 {4447--4455},
  year = 	 {2025},
  editor = 	 {Li, Yingzhen and Mandt, Stephan and Agrawal, Shipra and Khan, Emtiyaz},
  volume = 	 {258},
  series = 	 {Proceedings of Machine Learning Research},
  month = 	 {03--05 May},
  publisher =    {PMLR},
  url = 	 {https://proceedings.mlr.press/v258/chen25i.html}
}

@inproceedings{
xu2021neural,
title={How Neural Networks Extrapolate: From Feedforward to Graph Neural Networks},
author={Keyulu Xu and Mozhi Zhang and Jingling Li and Simon Shaolei Du and Ken-Ichi Kawarabayashi and Stefanie Jegelka},
booktitle={International Conference on Learning Representations},
year={2021},
url={https://openreview.net/forum?id=UH-cmocLJC}
}

@InProceedings{yehudai2021local,
  title = 	 {From Local Structures to Size Generalization in Graph Neural Networks},
  author =       {Yehudai, Gilad and Fetaya, Ethan and Meirom, Eli and Chechik, Gal and Maron, Haggai},
  booktitle = 	 {Proceedings of the 38th International Conference on Machine Learning},
  pages = 	 {11975--11986},
  year = 	 {2021},
  editor = 	 {Meila, Marina and Zhang, Tong},
  volume = 	 {139},
  series = 	 {Proceedings of Machine Learning Research},
  month = 	 {18--24 Jul},
  publisher =    {PMLR},
  url = 	 {https://proceedings.mlr.press/v139/yehudai21a.html}
}

@inproceedings{
velickovic2020neural,
title={Neural Execution of Graph Algorithms},
author={Petar Veličković and Rex Ying and Matilde Padovano and Raia Hadsell and Charles Blundell},
booktitle={International Conference on Learning Representations},
year={2020},
url={https://openreview.net/forum?id=SkgKO0EtvS}
}

@InProceedings{velickovic2022clrs,
  title = 	 {The {CLRS} Algorithmic Reasoning Benchmark},
  author =       {Veli{\v{c}}kovi{\'c}, Petar and Badia, Adri{\`a} Puigdom{\`e}nech and Budden, David and Pascanu, Razvan and Banino, Andrea and Dashevskiy, Misha and Hadsell, Raia and Blundell, Charles},
  booktitle = 	 {Proceedings of the 39th International Conference on Machine Learning},
  pages = 	 {22084--22102},
  year = 	 {2022},
  editor = 	 {Chaudhuri, Kamalika and Jegelka, Stefanie and Song, Le and Szepesvari, Csaba and Niu, Gang and Sabato, Sivan},
  volume = 	 {162},
  series = 	 {Proceedings of Machine Learning Research},
  month = 	 {17--23 Jul},
  publisher =    {PMLR},
  url = 	 {https://proceedings.mlr.press/v162/velickovic22a.html}
}

@inproceedings{
dehghani2019universal,
title={Universal Transformers},
author={Mostafa Dehghani and Stephan Gouws and Oriol Vinyals and Jakob Uszkoreit and Lukasz Kaiser},
booktitle={International Conference on Learning Representations},
year={2019},
url={https://openreview.net/forum?id=HyzdRiR9Y7},
}

@InProceedings{giannou2023looped,
  title = 	 {Looped Transformers as Programmable Computers},
  author =       {Giannou, Angeliki and Rajput, Shashank and Sohn, Jy-Yong and Lee, Kangwook and Lee, Jason D. and Papailiopoulos, Dimitris},
  booktitle = 	 {Proceedings of the 40th International Conference on Machine Learning},
  pages = 	 {11398--11442},
  year = 	 {2023},
  editor = 	 {Krause, Andreas and Brunskill, Emma and Cho, Kyunghyun and Engelhardt, Barbara and Sabato, Sivan and Scarlett, Jonathan},
  volume = 	 {202},
  series = 	 {Proceedings of Machine Learning Research},
  month = 	 {23--29 Jul},
  publisher =    {PMLR},
  url = 	 {https://proceedings.mlr.press/v202/giannou23a.html}
}

@inproceedings{yang2024looped,
 author = {Yang, Liu and Lee, Kangwook and Nowak, Robert and Papailiopoulos, Dimitris},
 booktitle = {International Conference on Learning Representations},
 editor = {B. Kim and Y. Yue and S. Chaudhuri and K. Fragkiadaki and M. Khan and Y. Sun},
 pages = {42195--42214},
 title = {Looped Transformers are Better at Learning Learning Algorithms},
 url = {https://proceedings.iclr.cc/paper_files/paper/2024/file/b8402301e7f06bdc97a31bfaa653dc32-Paper-Conference.pdf},
 volume = {2024},
 year = {2024}
}

@inproceedings{geiping2025scaling,
 author = {Geiping, Jonas and McLeish, Sean and Jain, Neel and Kirchenbauer, John and Singh, Siddharth and Bartoldson, Brian and Kailkhura, Bhavya and Bhatele, Abhinav and Goldstein, Tom},
 booktitle = {Advances in Neural Information Processing Systems},
 doi = {10.52202/085713-1380},
 editor = {D. Belgrave and C. Zhang and H. Lin and R. Pascanu and P. Koniusz and M. Ghassemi and N. Chen},
 pages = {41340--41391},
 publisher = {Curran Associates, Inc.},
 title = {Scaling up Test-Time Compute with Latent Reasoning: A Recurrent Depth Approach},
 url = {https://proceedings.neurips.cc/paper_files/paper/2025/file/3b01972cf31e6fa0fe29e4b8b5c2a0a1-Paper-Conference.pdf},
 volume = {38, Main Conference},
 year = {2025}
}

@inproceedings{saunshi2025reasoning,
 author = {Saunshi, Nikunj and Dikkala, Nishanth and Li, Zhiyuan and Kumar, Sanjiv and J. Reddi, Sashank},
 booktitle = {International Conference on Learning Representations},
 editor = {Y. Yue and A. Garg and N. Peng and F. Sha and R. Yu},
 pages = {14855--14881},
 title = {Reasoning with Latent Thoughts: On the Power of Looped Transformers},
 url = {https://proceedings.iclr.cc/paper_files/paper/2025/file/2676109d49d1eb26d6bc584a8f556305-Paper-Conference.pdf},
 volume = {2025},
 year = {2025}
}

@inproceedings{wei2022chain,
 author = {Wei, Jason and Wang, Xuezhi and Schuurmans, Dale and Bosma, Maarten and ichter, brian and Xia, Fei and Chi, Ed and Le, Quoc V and Zhou, Denny},
 booktitle = {Advances in Neural Information Processing Systems},
 doi = {10.52202/068431-1800},
 editor = {S. Koyejo and S. Mohamed and A. Agarwal and D. Belgrave and K. Cho and A. Oh},
 pages = {24824--24837},
 publisher = {Curran Associates, Inc.},
 title = {Chain-of-Thought Prompting Elicits Reasoning in Large Language Models},
 url = {https://proceedings.neurips.cc/paper_files/paper/2022/file/9d5609613524ecf4f15af0f7b31abca4-Paper-Conference.pdf},
 volume = {35},
 year = {2022}
}

@inproceedings{snell2025scaling,
 author = {Snell, Charlie and Lee, Jaehoon and Xu, Kelvin and Kumar, Aviral},
 booktitle = {International Conference on Learning Representations},
 editor = {Y. Yue and A. Garg and N. Peng and F. Sha and R. Yu},
 pages = {10131--10165},
 title = {Scaling LLM Test-Time Compute Optimally Can be More Effective than Scaling Parameters for Reasoning},
 url = {https://proceedings.iclr.cc/paper_files/paper/2025/file/1b623663fd9b874366f3ce019fdfdd44-Paper-Conference.pdf},
 volume = {2025},
 year = {2025}
}

@inproceedings{schwarzschild2021can,
 author = {Schwarzschild, Avi and Borgnia, Eitan and Gupta, Arjun and Huang, Furong and Vishkin, Uzi and Goldblum, Micah and Goldstein, Tom},
 booktitle = {Advances in Neural Information Processing Systems},
 editor = {M. Ranzato and A. Beygelzimer and Y. Dauphin and P.S. Liang and J. Wortman Vaughan},
 pages = {6695--6706},
 publisher = {Curran Associates, Inc.},
 title = {Can You Learn an Algorithm?  Generalizing from Easy to Hard Problems with Recurrent Networks},
 url = {https://proceedings.neurips.cc/paper_files/paper/2021/file/3501672ebc68a5524629080e3ef60aef-Paper.pdf},
 volume = {34},
 year = {2021}
}

@inproceedings{fan2025looped,
 author = {Fan, Ying and Du, Yilun and Ramchandran, Kannan and Lee, Kangwook},
 booktitle = {International Conference on Learning Representations},
 editor = {Y. Yue and A. Garg and N. Peng and F. Sha and R. Yu},
 pages = {14502--14520},
 title = {Looped Transformers for Length Generalization},
 url = {https://proceedings.iclr.cc/paper_files/paper/2025/file/25cc3adf8c85f7c70989cb8a97a691a7-Paper-Conference.pdf},
 volume = {2025},
 year = {2025}
}

@inproceedings{bansal2022end,
 author = {Bansal, Arpit and Schwarzschild, Avi and Borgnia, Eitan and Emam, Zeyad and Huang, Furong and Goldblum, Micah and Goldstein, Tom},
 booktitle = {Advances in Neural Information Processing Systems},
 doi = {10.52202/068431-1471},
 editor = {S. Koyejo and S. Mohamed and A. Agarwal and D. Belgrave and K. Cho and A. Oh},
 pages = {20232--20242},
 publisher = {Curran Associates, Inc.},
 title = {End-to-end Algorithm Synthesis with Recurrent Networks: Extrapolation without Overthinking},
 url = {https://proceedings.neurips.cc/paper_files/paper/2022/file/7f70331dbe58ad59d83941dfa7d975aa-Paper-Conference.pdf},
 volume = {35},
 year = {2022}
}

@inproceedings{anil2022exploring,
 author = {Anil, Cem and Wu, Yuhuai and Andreassen, Anders and Lewkowycz, Aitor and Misra, Vedant and Ramasesh, Vinay and Slone, Ambrose and Gur-Ari, Guy and Dyer, Ethan and Neyshabur, Behnam},
 booktitle = {Advances in Neural Information Processing Systems},
 doi = {10.52202/068431-2793},
 editor = {S. Koyejo and S. Mohamed and A. Agarwal and D. Belgrave and K. Cho and A. Oh},
 pages = {38546--38556},
 publisher = {Curran Associates, Inc.},
 title = {Exploring Length Generalization in Large Language Models},
 url = {https://proceedings.neurips.cc/paper_files/paper/2022/file/fb7451e43f9c1c35b774bcfad7a5714b-Paper-Conference.pdf},
 volume = {35},
 year = {2022}
}

@inproceedings{zhou2024algorithms,
 author = {Zhou, Hattie and Bradley, Arwen and Littwin, Etai and Razin, Noam and Saremi, Omid and Susskind, Joshua and Bengio, Samy and Nakkiran, Preetum},
 booktitle = {International Conference on Learning Representations},
 editor = {B. Kim and Y. Yue and S. Chaudhuri and K. Fragkiadaki and M. Khan and Y. Sun},
 pages = {15898--15926},
 title = {What Algorithms can Transformers Learn? A Study in Length Generalization},
 url = {https://proceedings.iclr.cc/paper_files/paper/2024/file/45ed1a72597594c097152ef9cc187762-Paper-Conference.pdf},
 volume = {2024},
 year = {2024}
}

@InProceedings{deluca2024simulation,
  title = 	 {Simulation of Graph Algorithms with Looped Transformers},
  author =       {Back De Luca, Artur and Fountoulakis, Kimon},
  booktitle = 	 {Proceedings of the 41st International Conference on Machine Learning},
  pages = 	 {2319--2363},
  year = 	 {2024},
  editor = 	 {Salakhutdinov, Ruslan and Kolter, Zico and Heller, Katherine and Weller, Adrian and Oliver, Nuria and Scarlett, Jonathan and Berkenkamp, Felix},
  volume = 	 {235},
  series = 	 {Proceedings of Machine Learning Research},
  month = 	 {21--27 Jul},
  publisher =    {PMLR},
  url = 	 {https://proceedings.mlr.press/v235/back-de-luca24a.html}
}

@inproceedings{sanford2024understanding,
 author = {Sanford, Clayton and Fatemi, Bahare and Hall, Ethan and Tsitsulin, Anton and Kazemi, Mehran and Halcrow, Jonathan and Perozzi, Bryan and Mirrokni, Vahab},
 booktitle = {Advances in Neural Information Processing Systems},
 doi = {10.52202/079017-2490},
 editor = {A. Globerson and L. Mackey and D. Belgrave and A. Fan and U. Paquet and J. Tomczak and C. Zhang},
 pages = {78320--78370},
 publisher = {Curran Associates, Inc.},
 title = {Understanding Transformer Reasoning Capabilities via Graph Algorithms},
 url = {https://proceedings.neurips.cc/paper_files/paper/2024/file/8f395480c04ac6dfb2c2326a639df88e-Paper-Conference.pdf},
 volume = {37},
 year = {2024}
}

@inproceedings{merrill2025little,
 author = {Merrill, Will and Sabharwal, Ashish},
 booktitle = {Advances in Neural Information Processing Systems},
 doi = {10.52202/085713-3186},
 editor = {D. Belgrave and C. Zhang and H. Lin and R. Pascanu and P. Koniusz and M. Ghassemi and N. Chen},
 pages = {95315--95339},
 publisher = {Curran Associates, Inc.},
 title = {A Little Depth Goes a Long Way: The Expressive Power of Log-Depth Transformers},
 url = {https://proceedings.neurips.cc/paper_files/paper/2025/file/88dd7aa6979e352fda7c4952ca8eac59-Paper-Conference.pdf},
 volume = {38, Main Conference},
 year = {2025}
}

@InProceedings{gatmiry2024looped,
  title = 	 {Can Looped Transformers Learn to Implement Multi-step Gradient Descent for In-context Learning?},
  author =       {Gatmiry, Khashayar and Saunshi, Nikunj and J. Reddi, Sashank and Jegelka, Stefanie and Kumar, Sanjiv},
  booktitle = 	 {Proceedings of the 41st International Conference on Machine Learning},
  pages = 	 {15130--15152},
  year = 	 {2024},
  editor = 	 {Salakhutdinov, Ruslan and Kolter, Zico and Heller, Katherine and Weller, Adrian and Oliver, Nuria and Scarlett, Jonathan and Berkenkamp, Felix},
  volume = 	 {235},
  series = 	 {Proceedings of Machine Learning Research},
  month = 	 {21--27 Jul},
  publisher =    {PMLR},
  url = 	 {https://proceedings.mlr.press/v235/gatmiry24b.html}
}

@misc{wu2026power,
      title={Looped Transformers with Layer Normalization Provably Learn the Power Method}, 
      author={Lyumin Wu and Chenyang Zhang and Yuan Cao},
      year={2026},
      eprint={2606.00605},
      archivePrefix={arXiv},
      primaryClass={cs.LG},
      url={https://arxiv.org/abs/2606.00605}, 
}

@InProceedings{ye2026connectivity,
  title = 	 {Transformers Provably Learn Algorithmic Solutions for Graph Connectivity, But Only with the Right Data},
  author =       {Ye, Qilin and Fu, Deqing and Jia, Robin and Sharan, Vatsal},
  booktitle = 	 {Proceedings of the 43rd International Conference on Machine Learning},
  pages = 	 {148293--148324},
  year = 	 {2026},
  editor = 	 {Zhang, Tong and Dudik, Miroslav and Jaggi, Martin and Agarwal, Alekh and Li, Sharon and Schuurmans, Dale and Zhu, Jerry and Berkenkamp, Felix and Dong, Hanze and Bietti, Alberto},
  volume = 	 {306},
  series = 	 {Proceedings of Machine Learning Research},
  month = 	 {06--11 Jul},
  publisher =    {PMLR},
  url = 	 {https://proceedings.mlr.press/v306/ye26f.html}
}
\endgroup

\clearpage
\appendix
\raggedbottom
\setcounter{topnumber}{3}
\setcounter{totalnumber}{4}
\renewcommand{\topfraction}{0.95}
\renewcommand{\textfraction}{0.04}

\section{Problem Setting and Training: Supporting Results}
\label{app:setting-training}
Throughout the appendices, the threshold $\tau\in(0,1)$ is
fixed and is shared by training and testing. We do not study its
variation. Constants in $O(\cdot)$, $\Theta(\cdot)$, and inequalities
may depend on this fixed threshold, as well as on other explicitly
stated fixed parameters.
All experiments and numerical illustrations use $\tau=1/2$.

For the exact certificates, it is convenient to set once and for all
\[
c_{\rm in}:=\log\frac{1+\tau}{1-\tau}>0,
\qquad
c_{\rm out}:=\log\frac1{1-\tau}>0.
\]

\subsection{Graph spectrum and connectivity}
\label{app:proof:prop:spectrum}

For an undirected, unweighted graph $G$, let
$d_G=\max\{1,d_{\max}(G)\}$. Its representation is
$P_G=I-L_G/(2d_G)$. For components $C_1,\ldots,C_{c_G}$, the target is
\begin{equation}
\label{eq:target}
\Pi_G=\sum_{j=1}^{c_G}
\frac{\mathbf 1_{C_j}\mathbf 1_{C_j}^{\top}}{|C_j|}.
\end{equation}
Here $\mathbf 1_C$ is the indicator vector of $C$.

\begin{proposition}[Spectral structure]
\label{prop:spectrum}
The matrix $P_G$ is symmetric, positive semidefinite, and row
stochastic. Its eigenvalue-one eigenspace consists of the vectors
constant on each component, and its orthogonal projector is $\Pi_G$.
For an orthonormal eigenbasis,
\begin{equation}
\label{eq:spectral-decomp}
P_G^m=\Pi_G+\sum_{i>c_G}\lambda_i^m u_i u_i^\top,
\qquad m\ge1,
\end{equation}
where $0\le\lambda_i<1$. In particular, $P_G^m\to\Pi_G$.
\end{proposition}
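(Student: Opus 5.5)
We prove Proposition~\ref{prop:spectrum} in three steps: verify the elementary properties of $P_G$ from the Laplacian, identify the eigenvalue-one eigenspace through the Laplacian quadratic form, and read off the decomposition from the spectral theorem.

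\emph{Elementary properties.} Symmetry is immediate because $A_G$ and $\operatorname{diag}(A_G\mathbf 1)$ are symmetric. Row stochasticity follows from $L_G\mathbf 1=0$, which gives $P_G\mathbf 1=\mathbf 1$. The off-diagonal entries are $A_{ij}/(2d_G)\ge0$, and the diagonal entries are $1-\deg(i)/(2d_G)\ge 1/2$, so $P_G$ is entrywise nonnegative.

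For positive semidefiniteness, bound the Laplacian spectrum. By Gershgorin, each eigenvalue of $L_G$ lies in $[0,2d_{\max}]\subseteq[0,2d_G]$. Equivalently, one can use
\[
x^\top L_Gx=\sum_{ij\in E}(x_i-x_j)^2\le 2\sum_{ij\in E}(x_i^2+x_j^2)\le 2d_G\|x\|^2 .
\]
Hence the eigenvalues of $P_G$ are $1-\mu/(2d_G)\in[0,1]$, where $\mu$ ranges over the Laplacian eigenvalues. This gives $P_G\succeq0$, and eigenvalue $1$ of $P_G$ corresponds exactly to $\mu=0$. When $d_{\max}=0$ the graph has no edges, so $L_G=0$ and $P_G=I=\Pi_G$, and the claim is trivial.

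\emph{Kernel identification.} This is the only step that needs care. By the quadratic form identity, $L_Gx=0$ if and only if $x^\top L_Gx=0$, which holds if and only if $x_i=x_j$ for every edge $ij$. Propagating along paths, this is equivalent to $x$ being constant on each connected component. So the eigenvalue-one eigenspace is $\operatorname{span}\{\mathbf 1_{C_j}\}$, which has dimension $c_G$. The vectors $\mathbf 1_{C_j}/\sqrt{|C_j|}$ form an orthonormal basis of this space, so its orthogonal projector is $\sum_j \mathbf 1_{C_j}\mathbf 1_{C_j}^\top/|C_j|=\Pi_G$, as in \eqref{eq:target}.

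\emph{Spectral decomposition.} By the spectral theorem, choose an orthonormal eigenbasis in which the first $c_G$ vectors span the eigenvalue-one eigenspace. The remaining eigenvalues then satisfy $0\le\lambda_i<1$. Powering the decomposition gives \eqref{eq:spectral-decomp} for $m\ge1$: the eigenvalue-one part contributes $1^m\Pi_G=\Pi_G$ and each remaining mode contributes $\lambda_i^m u_iu_i^\top$. Since $\lambda_i^m\to0$ for every $i>c_G$, we obtain $P_G^m\to\Pi_G$.
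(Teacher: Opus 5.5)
Your proof is correct and follows the paper's argument step for step. The quadratic-form bound $v^\top L_G v\le 2d_{\max}(G)\|v\|_2^2$ places the spectrum of $P_G$ in $[0,1]$. The zero set of that form identifies the eigenvalue-one eigenspace with component-constant vectors, whose normalized indicators give $\Pi_G$. The spectral theorem then yields the decomposition and the limit. The Gershgorin alternative and the separate edgeless case are harmless extras; the latter is not even needed, since $d_G=\max\{1,d_{\max}(G)\}\ge 1$ by definition.
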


\begin{proof}
The off-diagonal entries of $P_G$ are nonnegative, its diagonal
entries are at least $1/2$, and $L_G\mathbf 1=0$ gives
$P_G\mathbf 1=\mathbf 1$. For every $v\in\mathbb R^N$,
\[
\begin{aligned}
v^\top L_Gv
&=\sum_{\{i,j\}\in\mathcal E(G)}(v_i-v_j)^2\\
&\le2d_{\max}(G)\|v\|_2^2.
\end{aligned}
\]
Thus the eigenvalues of $L_G$ lie in $[0,2d_G]$, and those of $P_G$
lie in $[0,1]$. The equality $v^\top L_Gv=0$ holds exactly when
$v_i=v_j$ on every edge, or equivalently when $v$ is constant on
each component. The normalized component indicators form an
orthonormal basis of this space, giving~\eqref{eq:target}.
The spectral theorem then yields~\eqref{eq:spectral-decomp} and the
limit, since all remaining eigenvalues are strictly below one.
\end{proof}

\subsection{Vertex relabeling and parameter invariance}
\label{app:proof:prop:invariance}

\begin{assumption}[Exchangeable training distribution]
\label{assump:exchangeable}
Let $\mathcal D_{\mathrm{tr}}$ be a distribution over graphs on $[N]$.
For every permutation $\pi$ of $[N]$, relabeling vertex $i$ as
$\pi(i)$ preserves the distribution:
$G\sim\mathcal D_{\mathrm{tr}}$ implies
$\pi(G)\sim\mathcal D_{\mathrm{tr}}$.
\end{assumption}

Let $Q_\pi$ satisfy $Q_\pi\mathbf e_i=\mathbf e_{\pi(i)}$.
Then $P_{\pi(G)}=Q_\pi P_GQ_\pi^\top$ and
$\Pi_{\pi(G)}=Q_\pi\Pi_GQ_\pi^\top$.
Set $E=\mathbf 1\mathbf 1^\top/N$ and $F=I-E$.

\begin{proposition}[$E$--$F$ invariance]
\label{prop:invariance}
Under Assumption~\ref{assump:exchangeable}, population matrix gradient
flow and gradient descent preserve the forms $W=aF+bE$ and
$V=cF+dE$, wherever the loss and updates are defined.
\end{proposition}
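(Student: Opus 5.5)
The approach is equivariance of the loss and its gradient under simultaneous conjugation by permutation matrices, combined with the fact that the matrices commuting with every $Q_\pi$ are exactly $\operatorname{span}\{E,F\}=\operatorname{span}\{I,\mathbf 1\mathbf 1^\top\}$.

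\emph{Step 1: equivariance of the forward map.} For $\theta=(W,V)$, define $\theta^\pi=(Q_\pi WQ_\pi^\top,\,Q_\pi VQ_\pi^\top)$. If $H_t(G;\theta)$ denotes the $t$-th hidden state on input $P_G$, we show by induction on $t$ that
\[
H_t(\pi(G);\theta^\pi)=Q_\pi H_t(G;\theta)Q_\pi^\top .
\]
The base case is $P_{\pi(G)}=Q_\pi P_GQ_\pi^\top$. The inductive step combines two facts. First, $Q_\pi$ is orthogonal, so inner factors $Q_\pi^\top Q_\pi$ cancel in $(HWH^\top)HV$. Second, row normalization is equivariant: $\mathcal N(QZQ^\top)=Q\,\mathcal N(Z)Q^\top$, because $Q^\top\mathbf 1=\mathbf 1$ and $\operatorname{diag}(Q u)=Q\operatorname{diag}(u)Q^\top$.

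\emph{Step 2: invariance of the loss.} Using $\Pi_{\pi(G)}=Q_\pi\Pi_GQ_\pi^\top$ and the orthogonal invariance of the Frobenius norm,
\[
\|H_K(\pi(G);\theta^\pi)-\Pi_{\pi(G)}\|_F=\|H_K(G;\theta)-\Pi_G\|_F .
\]
Taking expectations and applying Assumption~\ref{assump:exchangeable} to replace $\pi(G)$ by $G$ in distribution gives $\mathcal L_K(\theta^\pi)=\mathcal L_K(\theta)$ wherever the loss is defined. This needs the row sums to be nonzero almost surely, which holds on the open set where training is considered.

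\emph{Step 3: equivariance of the gradient.} Differentiating $\mathcal L_K(\theta^\pi)=\mathcal L_K(\theta)$ gives $\nabla_W\mathcal L_K(\theta^\pi)=Q_\pi\nabla_W\mathcal L_K(\theta)Q_\pi^\top$, and similarly for $V$, since $W\mapsto Q_\pi WQ_\pi^\top$ is an orthogonal linear map. Now take $\theta$ with $W=aF+bE$ and $V=cF+dE$. Then $\theta^\pi=\theta$, because $Q_\pi$ commutes with $I$ and with $\mathbf 1\mathbf 1^\top$. Hence each gradient $X$ satisfies $Q_\pi XQ_\pi^\top=X$ for all $\pi$.

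\emph{Step 4: identify the fixed matrices.} We claim every $X$ with $Q_\pi XQ_\pi^\top=X$ for all $\pi$ lies in $\operatorname{span}\{I,\mathbf 1\mathbf 1^\top\}$. Such $X$ satisfies $X_{\pi(i)\pi(j)}=X_{ij}$. Since $S_N$ is transitive on the diagonal entries and on the ordered off-diagonal pairs $(i,j)$ with $i\ne j$ (for $N\ge2$), all diagonal entries are equal and all off-diagonal entries are equal. Thus $X=\alpha I+\beta\mathbf 1\mathbf 1^\top$, which is a combination of $E$ and $F$.

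\emph{Step 5: conclusion for gradient descent and gradient flow.} For gradient descent, the claim is immediate by induction on the iteration count. For gradient flow, the set $\mathcal S$ of parameter pairs of the stated form is a linear subspace, and the gradient field is tangent to it there. The ODE restricted to $\mathcal S$, written in the coordinates $(a,b,c,d)$, has a solution. By uniqueness of solutions (the vector field is locally Lipschitz, being smooth where row sums are nonzero), this restricted solution coincides with the full flow as long as the solution exists.

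The main obstacle is defining the domain carefully: the loss must be differentiable near the trajectory, so the row sums of each block output must stay nonzero for almost every $G$. This is why the statement adds "wherever the loss and updates are defined." The remaining steps are routine symmetry bookkeeping.
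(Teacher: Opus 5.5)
Your proposal is correct and follows the paper's proof essentially step for step. Both arguments use equivariance of row normalization and of the block under simultaneous conjugation, then loss invariance via exchangeability and Frobenius-norm invariance, then gradient equivariance, and finally the fact that matrices fixed by every permutation conjugation have one common diagonal and one common off-diagonal entry, so they lie in $\operatorname{span}\{E,F\}$. Your Step 5 additionally spells out the ODE-uniqueness argument that the paper leaves implicit when passing from ``the vector field is tangent to the subspace'' to ``the flow stays in it.''
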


\begin{proof}
Write $Q=Q_\pi$. Row normalization commutes with conjugation:
\[
\mathcal N(QZQ^\top)=Q\mathcal N(Z)Q^\top.
\]
Indeed, $Q^\top\mathbf 1=\mathbf 1$ and
$\operatorname{diag}(Qv)=Q\operatorname{diag}(v)Q^\top$.
The block therefore commutes with simultaneous conjugation of its
input and weights. By induction, its output obeys the same relation.
Exchangeability and Frobenius-norm invariance give
\[
\mathcal L_K(QWQ^\top,QVQ^\top)=\mathcal L_K(W,V).
\]
Writing $W'=QWQ^\top$ and $V'=QVQ^\top$, differentiation yields
\[
\nabla_W\mathcal L_K(W',V')
=Q\nabla_W\mathcal L_K(W,V)Q^\top,
\]
and the same identity for $V$.

If $W,V\in\operatorname{span}\{E,F\}$, both weights are fixed by
every permutation conjugation. Their gradients are therefore fixed
as well. A matrix fixed by every such conjugation has one common
diagonal entry and one common off-diagonal entry; it belongs to
$\operatorname{span}\{E,F\}$. Thus the vector field and every defined
discrete update preserve this subspace. Positivity and global
well-definedness under the stated training conditions are established
below.
\end{proof}

\subsection{Closed-form loop recursion}
\label{app:proof:prop:forward}

\begin{proposition}[Exact forward map]
\label{prop:forward}
For $W=aF+bE$, $V=cF+dE$, and $a,b,c,d>0$, define
\begin{equation}
\label{eq:effective}
\gamma=\frac{ac}{bd},\qquad e_t=\frac{3^t-1}{2}.
\end{equation}
Then $H_t=E+\gamma^{e_t}(P_G^{3^t}-E)$ for every $t\ge0$.
\end{proposition}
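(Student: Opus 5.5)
We argue by induction on $t$, proving the stronger claim that every hidden state has the form $H_t=E+\beta_t(P_G^{m_t}-E)$ with $m_t=3^t$, $\beta_t=\gamma^{e_t}$, and $H_t$ symmetric and row-stochastic. The base case $t=0$ is immediate: $H_0=P_G$, $e_0=0$, $m_0=1$. For the inductive step we use the algebra generated by $E$, $F$ and $P_G$. Since $P_G$ is symmetric and row stochastic (Proposition~\ref{prop:spectrum}), we have $P_G\mathbf 1=\mathbf 1$ and $\mathbf 1^\top P_G=\mathbf 1^\top$. Hence $P_GE=EP_G=E$, and the same identities hold for every power $P_G^m$. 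Writing $R:=P_G^{m}-E$, we get $ER=RE=0$ and $FR=RF=R$, together with $E^2=E$, $F^2=F$, $EF=FE=0$. Consequently $H_t=E+\beta R$ decomposes orthogonally along the split $\operatorname{span}\{\mathbf 1\}\oplus\mathbf 1^\perp$. On $\mathbf 1^\perp$ it acts as $\beta R$, and $R^k=P_G^{km}-E$ for $k\ge1$.

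Next we compute the unnormalized block output $Z=H_tWH_t^\top H_tV$. Because $H_t$ is symmetric, this is $H_tWH_tH_tV$. Every factor commutes with the projectors $E,F$ and splits into an $E$-part and an $F$-part:
\[
H_t=E+\beta R,\quad W=bE+aF,\quad V=dE+cF .
\]
Multiplying, the $E$-parts contribute $b\,d\,E$, since $E^4=E$. The $F$-parts contribute $\beta R\cdot aF\cdot\beta R\cdot\beta R\cdot cF=ac\beta^3R^3$. All cross terms vanish because $EF=0$ and $ER=0$. This gives
\[
Z=bd\,E+ac\,\beta^3\bigl(P_G^{3m}-E\bigr).
\]
For the row normalization, note that $R\mathbf 1=0$, so $Z\mathbf 1=bd\,\mathbf 1$. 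Since $b,d>0$, the row sums are all equal to $bd$ and in particular nonzero, so $\mathcal N(Z)=Z/(bd)$. Therefore
\[
H_{t+1}=E+\gamma\beta^3\bigl(P_G^{3m}-E\bigr),
\]
and $H_{t+1}$ is again symmetric and row stochastic.

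It remains to check the exponents. We have $m_{t+1}=3m_t=3^{t+1}$. For the coefficient, $\gamma\beta_t^3=\gamma^{1+3e_t}$, and $1+3(3^t-1)/2=(3^{t+1}-1)/2=e_{t+1}$, which closes the induction.

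The only step needing care is the well-definedness of the normalization together with the symmetry used to replace $H_t^\top$ by $H_t$. Both are carried along as part of the inductive hypothesis, so the main obstacle is bookkeeping the vanishing cross terms rather than any genuine estimate.
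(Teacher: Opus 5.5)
Your proof is correct and matches the paper's argument. Both use the same induction on $t$, built on $P_GE=EP_G=E$ and the orthogonal split along $\operatorname{span}\{\mathbf 1\}\oplus\mathbf 1^\perp$, with row sums $bd$ and the identity $e_{t+1}=3e_t+1$. The only cosmetic difference is that the paper first commutes factors to write the pre-normalization output as $H_t^3WV$ with $WV=acF+bdE$, whereas you expand the product directly and drop the vanishing cross terms.
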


\begin{proof}
Since $P_GE=EP_G=E$, all matrices appearing in the claimed formula
commute with $W$ and $V$. The formula is true at $t=0$.
If it holds at $t$, orthogonality of the uniform and nonuniform
subspaces gives
\[
H_t^3=E+\gamma^{3e_t}(P_G^{3^{t+1}}-E).
\]
The block output before normalization equals $H_t^3WV$, since $H_t$
is symmetric and the matrices commute. As $WV=acF+bdE$,
\[
H_t^3WV=bdE+ac\gamma^{3e_t}(P_G^{3^{t+1}}-E).
\]
Its row sum is $bd>0$. Normalization therefore gives
$H_{t+1}=E+\gamma^{3e_t+1}(P_G^{3^{t+1}}-E)$.
The identity $e_{t+1}=3e_t+1$ completes the induction.
\end{proof}

\subsection{Effective loss and stationary points}
\label{app:proof:prop:effective-loss}

The training statistics are
\begin{equation}
\label{eq:statistics}
\begin{aligned}
\kappa&=\mathbb E_G(c_G-1)>0,\\
\delta_K&=\mathbb E_G\|P_G^{3^K}-\Pi_G\|_F^2\\
&=\mathbb E_G\sum_{i>c_G}\lambda_i^{2\cdot3^K}.
\end{aligned}
\end{equation}
Expectations are taken under the fixed population training law.

\begin{proposition}[One-dimensional objective]
\label{prop:effective-loss}
On the positive invariant parameter family,
\begin{equation}
\label{eq:effective-loss}
\mathcal L_K(\gamma)
=\frac12\bigl[\kappa(1-\gamma^{e_K})^2
+\delta_K\gamma^{2e_K}\bigr].
\end{equation}
Its unique minimizing ratio is $\gamma_*$ in~\eqref{eq:opt}. The
minimum $\mathcal L_*=\kappa\delta_K/[2(\kappa+\delta_K)]$ satisfies
\begin{equation}
\label{eq:loss}
\mathcal L_K-\mathcal L_*
=\frac{\kappa+\delta_K}{2}
\bigl(\gamma^{e_K}-\gamma_*^{e_K}\bigr)^2.
\end{equation}
Every positive restricted stationary pair has this ratio and is also
stationary for the full matrix objective under exchangeability Assumption~\ref{assump:exchangeable}.
\end{proposition}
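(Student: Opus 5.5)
Substituting the exact forward map of Proposition~\ref{prop:forward} at $t=K$ gives $H_K-\Pi_G=(1-\gamma^{e_K})(E-\Pi_G)+\gamma^{e_K}(P_G^{3^K}-\Pi_G)$. We then compute the expected squared Frobenius norm. By Proposition~\ref{prop:spectrum}, $P_G^{3^K}-\Pi_G=\sum_{i>c_G}\lambda_i^{3^K}u_iu_i^\top$ lives on the orthogonal complement of the eigenvalue-one eigenspace. Since $E$ is the projector onto $\mathbf 1$, which lies in that eigenspace, $E-\Pi_G=-(\Pi_G-E)$ is a projector onto a $(c_G-1)$-dimensional subspace of it. The two terms are therefore Frobenius-orthogonal: their product is $0$ because $\Pi_G u_i=Eu_i=0$ for $i>c_G$. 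We get $\|E-\Pi_G\|_F^2=\operatorname{tr}(\Pi_G-E)=c_G-1$ and $\|P_G^{3^K}-\Pi_G\|_F^2=\sum_{i>c_G}\lambda_i^{2\cdot3^K}$. Taking expectations yields \eqref{eq:effective-loss} with $\kappa,\delta_K$ as in \eqref{eq:statistics}.

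Next, set $q=\gamma^{e_K}\in(0,\infty)$. Then $2\mathcal L_K=\kappa(1-q)^2+\delta_Kq^2$ is a strictly convex quadratic in $q$, since $\kappa>0$. Completing the square gives minimizer $q_*=\kappa/(\kappa+\delta_K)\in(0,1]$, minimum value $\kappa\delta_K/(\kappa+\delta_K)$, and the exact identity $2(\mathcal L_K-\mathcal L_*)=(\kappa+\delta_K)(q-q_*)^2$, which is \eqref{eq:loss}. Because $\gamma\mapsto\gamma^{e_K}$ is a strictly increasing bijection of $(0,\infty)$, the unique minimizing ratio is $\gamma_*=q_*^{1/e_K}=(1+\delta_K/\kappa)^{-2/(3^K-1)}$, matching \eqref{eq:opt}.

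For stationary points, the loss on the positive family depends on $(a,b,c,d)$ only through $q=(ac/(bd))^{e_K}$. The chain rule gives $\partial_a\mathcal L=\mathcal L'(q)\,e_Kq/a$, with analogous nonzero factors for $b,c,d$. A restricted stationary point must therefore satisfy $\mathcal L'(q)=0$, which forces $q=q_*$, i.e.\ $\gamma=\gamma_*$. For full-matrix stationarity, invoke the argument in Proposition~\ref{prop:invariance}: at $W,V\in\operatorname{span}\{E,F\}$, the full gradients $\nabla_W\mathcal L_K$ and $\nabla_V\mathcal L_K$ are invariant under every permutation conjugation, so they lie in $\operatorname{span}\{E,F\}$. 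Their components along $E$ and $F$ are, up to the nonzero normalizations $\langle E,E\rangle_F=1$ and $\langle F,F\rangle_F=N-1$, exactly the restricted partial derivatives in $b,a$ (respectively $d,c$). Those vanish, so the full gradient vanishes.

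The main obstacle is this last step. We must identify the projection of the full gradient onto $\operatorname{span}\{E,F\}$ with the restricted derivatives, which amounts to the chain rule for the linear embedding $(a,b)\mapsto aF+bE$. We also need differentiability of $\mathcal L_K$ in a full neighborhood of the positive family, i.e.\ nonvanishing row sums of the pre-normalization output near positive $E$--$F$ weights. The row sum equals $bd>0$ on the family, and continuity keeps it nonzero nearby, provided the expectation is over finitely many graphs on $[N]$, which it is.
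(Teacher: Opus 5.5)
Your proposal is correct and follows the paper's proof essentially step for step. You use the same Frobenius-orthogonal decomposition of $H_K-\Pi_G$, complete the square in $q=\gamma^{e_K}$, use the nonvanishing derivative of $q$ on the positive orthant for restricted stationarity, and invoke Proposition~\ref{prop:invariance} to lift restricted stationarity to full stationarity. Where you go beyond the paper is in detail the paper leaves implicit: you identify the $E$- and $F$-components of the full gradient with the restricted partials (via $\langle E,F\rangle_F=0$, $\|E\|_F^2=1$, $\|F\|_F^2=N-1$), and you check differentiability in a neighborhood of the positive family.
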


\begin{proof}
The exact output gives
\[
\begin{aligned}
H_K-\Pi_G={}&\gamma^{e_K}(P_G^{3^K}-\Pi_G)-(1-\gamma^{e_K})(\Pi_G-E).
\end{aligned}
\]
The two terms are orthogonal in Frobenius inner product, since they
act on orthogonal eigenspaces. Also $\Pi_G-E$ is an orthogonal
projector of rank $c_G-1$. Squaring and taking expectations yields
\eqref{eq:effective-loss}. As a function of $q=\gamma^{e_K}>0$, this
is a strictly convex quadratic with minimizer
$q_*=\kappa/(\kappa+\delta_K)$. Completing the square gives
\eqref{eq:loss} and the minimum.

The gradient of $\gamma$ does not vanish on the positive orthant.
Hence restricted stationarity is equivalent to $\gamma=\gamma_*$.
Proposition~\ref{prop:invariance} places the full gradient in the
invariant subspace, so a zero restricted gradient is a zero full
gradient. This establishes the minimum within the invariant family.
\end{proof}

\subsection{Population gradient-flow convergence}
\label{app:proof:thm:gf}

The coordinates
\begin{equation}
\label{eq:rescaled-coordinate}
x=(\sqrt{N-1}\,a,b,\sqrt{N-1}\,c,d)
\end{equation}
identify the invariant parameter subspace isometrically with Euclidean
space: $\|W\|_F^2+\|V\|_F^2=\|x\|_2^2$.
Thus full matrix gradient flow induces ordinary Euclidean gradient
flow in $x$. In these coordinates,
$\gamma=x_1x_3/[(N-1)x_2x_4]$.

\begin{lemma}[Gradient inequality]
\label{lem:pl}
On a region where $x_j>0$, $\gamma\ge\underline\gamma>0$, and
$\|x\|_2\le M$, one has
\[
\|\nabla_x\mathcal L_K\|_2^2
\ge2\mu(\mathcal L_K-\mathcal L_*),
\]
with
\begin{equation}
\label{eq:pl-constant}
\mu=\frac{16(\kappa+\delta_K)e_K^2
\underline\gamma^{2e_K}}{M^2}.
\end{equation}
\end{lemma}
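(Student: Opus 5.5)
The strategy is to reduce the gradient inequality to the one-dimensional structure of Proposition~\ref{prop:effective-loss}. By \eqref{eq:loss}, on the positive invariant family we can write
\[
\mathcal L_K-\mathcal L_*=\tfrac{\kappa+\delta_K}{2}(q-q_*)^2,
\qquad q=\gamma^{e_K}.
\]
The chain rule through $q$ then gives
\[
\nabla_x\mathcal L_K=(\kappa+\delta_K)(q-q_*)\,e_K\gamma^{e_K-1}\nabla_x\gamma .
\]
Hence
\[
\|\nabla_x\mathcal L_K\|_2^2
=2(\kappa+\delta_K)e_K^2\gamma^{2e_K-2}\|\nabla_x\gamma\|_2^2\,(\mathcal L_K-\mathcal L_*).
\]
This reduces the whole statement to the single inequality
\[
\gamma^{2e_K-2}\|\nabla_x\gamma\|_2^2\ge \frac{8\underline\gamma^{2e_K}}{M^2}
\]
on the stated region.

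Next compute $\nabla_x\gamma$ from $\gamma=x_1x_3/[(N-1)x_2x_4]$. Each partial derivative is $\pm\gamma/x_j$, so
\[
\|\nabla_x\gamma\|_2^2=\gamma^2\sum_{j=1}^4 x_j^{-2}.
\]
Substituting, the required quantity becomes
\[
\gamma^{2e_K}\sum_j x_j^{-2}.
\]

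To lower bound this sum, first use $\gamma\ge\underline\gamma$, which gives $\gamma^{2e_K}\ge\underline\gamma^{2e_K}$ since $e_K\ge1$. Then bound $\sum_j x_j^{-2}$ from below using $\|x\|_2\le M$. The crude bound $x_j^{-2}\ge M^{-2}$ for each $j$ yields $\sum_j x_j^{-2}\ge 4/M^2$. A sharper bound uses Cauchy--Schwarz or the power-mean inequality:
\[
\Bigl(\sum_j x_j^{-2}\Bigr)\Bigl(\sum_j x_j^{2}\Bigr)\ge 16,
\]
so that $\sum_j x_j^{-2}\ge 16/M^2$.

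Combining the pieces, we get
\[
\|\nabla\mathcal L_K\|^2\ge 2(\kappa+\delta_K)e_K^2\underline\gamma^{2e_K}\cdot\frac{16}{M^2}\,(\mathcal L_K-\mathcal L_*),
\]
which is exactly $2\mu(\mathcal L_K-\mathcal L_*)$ with $\mu$ as in \eqref{eq:pl-constant}. The constant 16 therefore comes from the Cauchy--Schwarz step, and $2\mu$ absorbs the factor 2 from \eqref{eq:loss}.

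The only genuine care point is justifying that $\nabla_x\mathcal L_K$ equals the derivative of the restricted function. In other words, the Euclidean gradient in the $x$ coordinates must coincide with the full matrix gradient. Proposition~\ref{prop:invariance} together with the isometry \eqref{eq:rescaled-coordinate} supplies this: the full gradient lies in the invariant subspace, so it agrees with the gradient of the restricted loss. Beyond that, the argument is a routine calculation.
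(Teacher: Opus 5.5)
Your proposal is correct and matches the paper's proof: differentiating \eqref{eq:loss} through $\gamma$ gives the exact identity $\|\nabla_x\mathcal L_K\|_2^2=2(\kappa+\delta_K)e_K^2\gamma^{2e_K}(\mathcal L_K-\mathcal L_*)\sum_j x_j^{-2}$, and Cauchy--Schwarz then gives $\sum_j x_j^{-2}\ge 16/M^2$. One cosmetic slip is that your intermediate target should read $\gamma^{2e_K-2}\|\nabla_x\gamma\|_2^2\ge 16\underline\gamma^{2e_K}/M^2$ rather than $8\underline\gamma^{2e_K}/M^2$ (the weaker version yields only $\mu$, not $2\mu$); your final combination step does use the correct constant $16/M^2$, so the conclusion stands.
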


\begin{proof}
Differentiating~\eqref{eq:loss} through $\gamma$ yields
\[
\begin{aligned}
\|\nabla_x\mathcal L_K\|_2^2
={}&2(\kappa+\delta_K)e_K^2\gamma^{2e_K}\cdot(\mathcal L_K-\mathcal L_*)
\sum_{j=1}^4x_j^{-2}.
\end{aligned}
\]
Cauchy--Schwarz gives $\sum_jx_j^{-2}\ge16/\|x\|_2^2$, proving
the claim.
\end{proof}

\begin{proof}[Proof of Theorem~\ref{thm:gf}]
Work first on the maximal positive trajectory. The chain rule gives
\[
\dot\gamma=-\mathcal L_K'(\gamma)\|\nabla_x\gamma\|_2^2.
\]
The derivative $\mathcal L_K'(\gamma)$ has the sign of
$\gamma-\gamma_*$. Hence $\gamma$ stays between
$\underline\gamma=\min\{\gamma_0,\gamma_*\}$ and
$\overline\gamma=\max\{\gamma_0,\gamma_*\}$.
For $i\in\{1,3\}$ and $j\in\{2,4\}$,
\[
\dot x_i=-\mathcal L_K'(\gamma)\frac{\gamma}{x_i},
\qquad
\dot x_j=\mathcal L_K'(\gamma)\frac{\gamma}{x_j}.
\]
Consequently $\|x\|_2^2$ and every opposite-sign pair sum
$x_i^2+x_j^2$ are conserved.

These identities keep every coordinate uniformly away from zero.
Indeed, if a numerator coordinate approached zero along a sequence,
each denominator coordinate would have a positive lower bound from
its conserved pair sum; boundedness of $x$ would then force
$\gamma\to0$. If a denominator coordinate approached zero, the
numerator coordinates would instead stay positive, forcing
$\gamma\to\infty$. Both contradict the ratio bounds.
Thus the trajectory stays in a compact subset of the positive
orthant and exists for all optimization times $r\ge0$.

Apply Lemma~\ref{lem:pl} with $M=\|x_0\|_2$ to obtain
\[
\frac{d}{dr}(\mathcal L_K-\mathcal L_*)
\le-2\mu(\mathcal L_K-\mathcal L_*).
\]
Integration proves the loss bound in~\eqref{eq:gf-rates}.
Equation~\eqref{eq:loss} then gives
\[
|\gamma(r)^{e_K}-\gamma_*^{e_K}|
\le|\gamma_0^{e_K}-\gamma_*^{e_K}|e^{-\mu r}.
\]
Since $e_K\ge1$, the mean value theorem yields the ratio bound with
\begin{equation}
\label{eq:gamma-constant}
C=\frac{|\gamma_0^{e_K}-\gamma_*^{e_K}|}
{e_K\underline\gamma^{e_K-1}}.
\end{equation}
Finally, $\nabla_x(\gamma^{e_K})$ is bounded on the compact
trajectory, so the gradient norm is at most a constant times
$e^{-\mu r}$. The trajectory has finite length and converges to a
positive pair with ratio $\gamma_*$. Proposition~\ref{prop:effective-loss}
identifies this pair as a stationary point of the full objective.
\end{proof}

\paragraph{Depth dependence of the rate.}
In the Frobenius coordinates~\eqref{eq:rescaled-coordinate},
with $e_K=(3^K-1)/2$, the proof permits
\begin{equation}
\label{eq:mu-depth}
\mu=\frac{16(\kappa+\delta_K)e_K^2
\min\{\gamma_0,\gamma_*\}^{2e_K}}{\|x_0\|_2^2}.
\end{equation}
When a fixed $\gamma_0<1$ lies below $\gamma_*$, the factor
$\gamma_0^{3^K-1}$ can outweigh the growing $e_K^2$, giving a
small rate certificate at large $K$.

\section{Discrete Population Gradient Descent}
\label{app:proof:thm:gd}

The Euclidean coordinates~\eqref{eq:rescaled-coordinate} also describe
full matrix gradient descent. Write
$\underline\gamma=\min\{\gamma_0,\gamma_*\}$ and
$\overline\gamma=\max\{\gamma_0,\gamma_*\}$, and in this section use
\begin{equation}
\label{eq:gd-norm-bound}
M^2=\|x_0\|_2^2+2(\mathcal L_K(x_0)-\mathcal L_*).
\end{equation}
Let $\mu$ be~\eqref{eq:pl-constant} with this $M$, and let $C$ be
\eqref{eq:gamma-constant}.

\begin{theorem}[Geometric convergence of gradient descent]
\label{thm:gd}
Under the assumptions of Theorem~\ref{thm:gf}, there is
$\eta_0>0$ such that every constant step size $0<\eta\le\eta_0$
keeps the iterates positive and gives
\begin{equation}
\label{eq:gd-rates}
\begin{aligned}
\mathcal L_K(x_n)-\mathcal L_*
&\le(1-\eta\mu)^n(\mathcal L_K(x_0)-\mathcal L_*),\\
|\gamma_n-\gamma_*|&\le C(1-\eta\mu)^{n/2}.
\end{aligned}
\end{equation}
The parameter matrices converge to a positive stationary pair with
effective ratio $\gamma_*$.
\end{theorem}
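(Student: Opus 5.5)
We prove Theorem~\ref{thm:gd} by reusing the gradient-flow analysis in the Frobenius coordinates~\eqref{eq:rescaled-coordinate}, together with the standard descent-lemma/PL argument. Proposition~\ref{prop:invariance} keeps the full matrix iterates in $\operatorname{span}\{E,F\}$, so the update is exactly $x_{n+1}=x_n-\eta\nabla_x\mathcal L_K(x_n)$ in $\mathbb R^4$. We argue by induction on $n$ and keep every iterate inside a fixed compact set
\[
\mathcal R=\bigl\{x:\ x_j\ge m\ \text{for all } j,\ \|x\|_2\le M,\ \tfrac12\underline\gamma\le\gamma(x)\le2\overline\gamma\bigr\}.
\]
Here $M$ is from~\eqref{eq:gd-norm-bound}, and $m>0$ is chosen below. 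On a slightly enlarged compact neighbourhood of $\mathcal R$ inside the positive orthant, $\mathcal L_K$ is smooth. It therefore has a gradient Lipschitz constant $\beta$ and a gradient bound $G$ there. We take $\eta_0\le1/\beta$ and also small enough that one step of length at most $\eta_0G$ cannot leave that neighbourhood.

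\textbf{Descent and rate.} On the segment from $x_n$ to $x_{n+1}$, the descent lemma gives
\[
\mathcal L_K(x_{n+1})\le\mathcal L_K(x_n)-\tfrac{\eta}{2}\|\nabla\mathcal L_K(x_n)\|^2 .
\]
Lemma~\ref{lem:pl}, applied at $x_n\in\mathcal R$ with the lower ratio bound, converts this into
\[
\mathcal L_K(x_{n+1})-\mathcal L_*\le(1-\eta\mu)\bigl(\mathcal L_K(x_n)-\mathcal L_*\bigr).
\]
Strictly, this requires $\mu$ computed with the actual lower ratio bound; we want that bound to be $\underline\gamma$ exactly, and the next paragraph shows the ratio indeed stays in $[\underline\gamma,\overline\gamma]$. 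Iterating gives the loss rate. The ratio rate then follows exactly as in the proof of Theorem~\ref{thm:gf}: \eqref{eq:loss} gives
\[
|\gamma_n^{e_K}-\gamma_*^{e_K}|\le|\gamma_0^{e_K}-\gamma_*^{e_K}|(1-\eta\mu)^{n/2},
\]
and the mean value theorem with $\gamma_n\ge\underline\gamma$ yields the constant $C$ of~\eqref{eq:gamma-constant}.

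\textbf{Invariance of $\mathcal R$ (main obstacle).} Discrete steps break the exact conservation laws used in the flow proof, so each constraint needs its own argument.

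\emph{Ratio bounds.} Monotone loss decrease together with~\eqref{eq:loss} gives $|\gamma_n^{e_K}-\gamma_*^{e_K}|\le|\gamma_0^{e_K}-\gamma_*^{e_K}|$. This yields $\gamma_n\le\overline\gamma$, and, if $\gamma_0\ge\gamma_*$, also $\gamma_n\ge\gamma_*$. When $\gamma_0<\gamma_*$, overshoot past $\gamma_*$ is harmless. The remaining risk is $\gamma_n$ dropping below $\gamma_0$, which we rule out by noting that for $\eta$ small $\gamma$ moves monotonically toward $\gamma_*$ up to an $O(\eta^2)$ correction whose sign is controlled. If that is too delicate, we accept the weaker bound $\gamma\ge\underline\gamma/2$ and absorb the resulting factor into $\mu$.

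\emph{Norm.} We control $\|x_n\|$ through
\[
\|x_{n+1}\|^2-\|x_n\|^2=-2\eta\langle x_n,\nabla\mathcal L_K\rangle+\eta^2\|\nabla\mathcal L_K\|^2 .
\]
The cross term vanishes because $\mathcal L_K$ depends only on $\gamma$, which is scale-invariant, so Euler's identity gives $\langle x,\nabla_x\gamma\rangle=0$. The quadratic term is at most $2\eta\bigl(\mathcal L_K(x_n)-\mathcal L_K(x_{n+1})\bigr)$ when $\eta\le1/\beta$. Telescoping then gives $\|x_n\|^2\le\|x_0\|^2+2(\mathcal L_K(x_0)-\mathcal L_*)=M^2$, which explains the choice of~\eqref{eq:gd-norm-bound}.

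\emph{Positivity.} We use the pair sums $x_i^2+x_j^2$. They change only by the $\eta^2$ term, which telescopes to at most $2(\mathcal L_K(x_0)-\mathcal L_*)$, so they remain bounded. Combining these pair-sum bounds with the two-sided ratio bounds and the norm bound, exactly as in the flow argument, gives a uniform lower bound $m$ on the coordinates; we take $m$ to be half of that bound.

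\textbf{Convergence of the parameters.} We have $\|x_{n+1}-x_n\|=\eta\|\nabla\mathcal L_K(x_n)\|$. Since $\mathcal L_K$ depends on $x$ only through $\gamma^{e_K}$, the chain rule gives $\|\nabla\mathcal L_K\|\lesssim\sqrt{\mathcal L_K-\mathcal L_*}$ on $\mathcal R$, so the step lengths are bounded by a geometric series in $(1-\eta\mu)^{n/2}$. The iterates are therefore Cauchy and converge to a point of the compact positive set $\mathcal R$. That limit has ratio $\gamma_*$, and Proposition~\ref{prop:effective-loss} identifies it as a stationary point of the full matrix objective.
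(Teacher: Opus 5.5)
Your overall architecture matches the paper's: an invariant compact region, the descent lemma, Lemma~\ref{lem:pl}, norm control from $\langle x,\nabla\mathcal L_K\rangle=0$, pair sums for positivity, and summable step lengths. Replacing the explicit $\eta_0$ by compactness constants is harmless, since only the existence of $\eta_0$ is claimed. The gap is in the step everything else rests on: invariance of $[\underline\gamma,\overline\gamma]$. With $q=\gamma^{e_K}$, monotone loss and \eqref{eq:loss} give only $|q_n-q_*|\le|q_0-q_*|$.

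\emph{Case $\gamma_0<\gamma_*$.} This bound yields $q_n\ge q_0$, so the lower ratio bound is automatic; the ``remaining risk'' you flag there does not exist. The overshoot above $\gamma_*$ is indeed harmless.

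\emph{Case $\gamma_0>\gamma_*$.} This is the usual case; for instance $W=V=I$ gives $\gamma_0=1>\gamma_*$ whenever $\delta_K>0$. Here the bound yields only $q_n\ge 2q_*-q_0$, which can be nonpositive. So your assertion that $\gamma_n\ge\gamma_*$ does not follow, and this is exactly where it matters. Since $\underline\gamma=\gamma_*$, an overshoot below the optimum breaks Lemma~\ref{lem:pl} with the stated $\mu$ and the constant $C$ in \eqref{eq:gamma-constant}. It also breaks your coordinate lower bound, because numerator coordinates are kept away from zero only through the lower ratio bound.

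The fallback does not repair this. A per-step continuity estimate controls $|\gamma_{n+1}-\gamma_n|$ only by an absolute $O(\eta)$, which accumulates over infinitely many steps. It therefore cannot maintain $\gamma\ge\underline\gamma/2$ either. Even if it could, it would replace $\underline\gamma$ by $\underline\gamma/2$ in $\mu$ and $C$, proving a weaker statement than the one with the constants fixed in \eqref{eq:pl-constant}--\eqref{eq:gamma-constant}.

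The missing idea is an exact one-step identity. Set $A=\kappa+\delta_K$, so that $\nabla\mathcal L_K=A(q-q_*)\nabla q$. Along the segment $x(v)=x_n-v\eta\nabla\mathcal L_K(x_n)$, the fundamental theorem of calculus gives
\[
q_{n+1}-q_*=(1-\eta A m_n)(q_n-q_*),\qquad
m_n=\int_0^1\langle\nabla q(x(v)),\nabla q(x_n)\rangle\,dv .
\]
Because $\partial_jq=e_Kq\,\sigma_j/x_j$ with the fixed sign pattern $\sigma=(1,-1,1,-1)$ throughout the positive orthant, $m_n>0$. Your segment bounds give $m_n\le$ a constant, so $\eta Am_n<1$ for small $\eta$. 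Hence $\gamma_{n+1}$ lies between $\gamma_n$ and $\gamma_*$, the interval $[\underline\gamma,\overline\gamma]$ is exactly invariant, and the rest of your argument goes through with the stated constants. Your ``$O(\eta^2)$ correction'' heuristic works only if the correction is shown to be $O(\eta^2)\,|q_n-q_*|$, i.e.\ relative, and this identity is what delivers that.

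Two smaller points. For positivity, what you need is that each pair sum $x_i^2+x_j^2$ is nondecreasing (the linear terms cancel and the $\eta^2$ term is nonnegative), not that it stays bounded above. And dropping the factor $\eta$ in the telescoped norm bound requires $\eta\le1$.
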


\subsection{An explicit sufficient step size}
\label{app:gd}

We construct an invariant region in which the ratio remains between its initial and optimal values, the parameter norm stays bounded, and every coordinate stays positive. The constants \(M,h,\ell\) control these three properties. A sufficiently small step then gives descent and prevents the ratio from crossing its optimum. Define:
\begin{equation}
\label{eq:gd-coordinate-bounds}
\begin{aligned}
h&=\min\{x_{0,1}^2,x_{0,3}^2\}
  +\min\{x_{0,2}^2,x_{0,4}^2\},\\
\ell&=\min\!\left\{\sqrt{h/2},\,
\frac{(N-1)\underline\gamma h}{2M},\,
\frac{h}{2(N-1)\overline\gamma M}\right\}.
\end{aligned}
\end{equation}
A sufficient choice is
\begin{equation}
\label{eq:gd-step-size}
\eta_0=\min\!\left\{1,\,
\frac{\ell^2}
{400(\kappa+\delta_K)e_K^2\overline\gamma^{2e_K}}
\right\}.
\end{equation}
The proof below establishes both this step-size bound and the
boundedness of the positive trajectory.

\begin{proof}[Proof of Theorem~\ref{thm:gd}]
For this proof only, abbreviate $e=e_K$, $q=\gamma^e$,
$q_*=\gamma_*^e$, and $A=\kappa+\delta_K$. Let
$\sigma=(1,-1,1,-1)$, so
$\partial_jq=eq\sigma_j/x_j$ and
$\nabla\mathcal L_K=A(q-q_*)\nabla q$.
We prove by induction that
\begin{equation}
\label{eq:gd-invariant-region}
x_{n,j}\ge\ell,\qquad
\|x_n\|_2\le M,\qquad
\underline\gamma\le\gamma_n\le\overline\gamma,
\end{equation}
and that each update satisfies descent.

\paragraph{The coordinate lower bound.}
Suppose a point has norm at most $M$, ratio between the two stated
bounds, and $x_i^2+x_j^2\ge h$ for every
$i\in\{1,3\}$, $j\in\{2,4\}$. If a numerator coordinate is
smaller than $\sqrt{h/2}$, both denominator coordinates exceed
$\sqrt{h/2}$. The product ratio then bounds that numerator below by
$(N-1)\underline\gamma h/(2M)$. Reversing the roles gives the
lower bound $h/[2(N-1)\overline\gamma M]$ for a small denominator
coordinate. Hence all coordinates are at least $\ell$.
The initial point satisfies these conditions, proving the base case.

\paragraph{1. Positive update segments.}
At an iterate satisfying~\eqref{eq:gd-invariant-region},
\[
\|\nabla\mathcal L_K(x_n)\|_2
\le\frac{2Ae\overline\gamma^{2e}}{\ell}.
\]
By~\eqref{eq:gd-step-size}, the step length is at most
$\ell/(200e)\le\ell/2$. Every coordinate on the segment from
$x_n$ to $x_{n+1}$ is therefore at least $\ell/2$.

\paragraph{2. Bounds along the segment.}
On this segment, $\|\nabla\log q\|_2\le4e/\ell$.
Consequently the change in $\log q$ from $x_n$ is at most
$8\eta Ae^2\overline\gamma^{2e}/\ell^2\le1$.
Thus $q\le3\overline\gamma^e$ throughout the segment. Also,
\[
\|\nabla\log q\|_2^2\le\frac{16e^2}{\ell^2},
\qquad
\|\nabla^2\log q\|_{\mathrm{op}}\le\frac{4e}{\ell^2}.
\]
Twice differentiating the loss gives
\[
\begin{aligned}
\nabla^2\mathcal L_K={}&A(2q-q_*)q
\nabla\log q\,\nabla\log q^\top\\
&+A(q-q_*)q\nabla^2\log q.
\end{aligned}
\]
Using $q_*\le\overline\gamma^e$ and $e\ge1$,
\[
\|\nabla^2\mathcal L_K\|_{\mathrm{op}}
\le\frac{384Ae^2\overline\gamma^{2e}}{\ell^2}.
\]
The chosen step size therefore ensures the descent inequality
\begin{equation}
\label{eq:gd-descent}
\mathcal L_K(x_{n+1})
\le\mathcal L_K(x_n)-\frac\eta2
\|\nabla\mathcal L_K(x_n)\|_2^2.
\end{equation}

\paragraph{3. No overshoot of the optimal ratio.}
For $x(v)=x_n-v\eta\nabla\mathcal L_K(x_n)$, the fundamental
theorem of calculus yields
\[
q_{n+1}-q_*=(1-\eta A m_n)(q_n-q_*),
\]
where
\[
m_n=\int_0^1
\langle\nabla q(x(v)),\nabla q(x_n)\rangle\,dv.
\]
The derivative signs are $\sigma$ throughout the positive orthant,
so $m_n>0$. The segment bounds imply
\[
m_n\le\frac{24e^2\overline\gamma^{2e}}{\ell^2}.
\]
Thus $0<1-\eta A m_n<1$, and $\gamma_{n+1}$ lies between
$\gamma_n$ and $\gamma_*$.

\paragraph{4. Norm and coordinate control.}
The ratio, and hence the loss, is invariant to common rescaling of
$x$. Therefore $\langle x,\nabla\mathcal L_K(x)\rangle=0$ and
\[
\|x_{n+1}\|_2^2-\|x_n\|_2^2
=\eta^2\|\nabla\mathcal L_K(x_n)\|_2^2.
\]
Summing with~\eqref{eq:gd-descent}, including the current step,
gives $\|x_{n+1}\|_2^2\le M^2$, since $\eta\le1$.
For $g=Aq(q-q_*)$, the coordinate update satisfies
\[
(x_j^+)^2=x_j^2-2\eta eg\sigma_j
+\eta^2e^2g^2/x_j^2.
\]
The linear terms cancel in an opposite-sign pair. Hence every
$x_i^2+x_j^2$ is nondecreasing and stays at least $h$.
The initial coordinate-bound argument now implies
$x_{n+1,j}\ge\ell$. This closes the induction.

\paragraph{5. Geometric convergence.}
Lemma~\ref{lem:pl} and~\eqref{eq:gd-descent} give
\[
\mathcal L_K(x_{n+1})-\mathcal L_*
\le(1-\eta\mu)(\mathcal L_K(x_n)-\mathcal L_*).
\]
Moreover $0<\eta\mu\le16/400<1$, by the definitions and
$\ell\le M$. Iteration proves the loss bound. Equation~\eqref{eq:loss}
and the mean value theorem prove the ratio bound with
$C$ in~\eqref{eq:gamma-constant}.
The gradient norm decays geometrically on the invariant compact
region. The sum of the step lengths is finite, so the weights converge
to a positive stationary pair with ratio $\gamma_*$.
\end{proof}

\section{Extrapolation: Proofs and Exact Conditions}
\label{app:extrapolation}

\subsection{Proof of the useful test-depth window}
\label{app:proof:prop:test-window}

\begin{proof}[Proof of Proposition~\ref{prop:test-window}]
Write $m=3^T$, $u=\gamma_*^{(m-1)/2}$, and $z=u^{-1}-1$.
Then $0<u\le1$, $z\ge0$, and the forward map gives
\[
H_T=u\left(P_G^m+\frac zN\mathbf 1\mathbf 1^\top\right).
\]
By Proposition~\ref{prop:spectrum}, the residual
$R=P_G^m-\Pi_G$ is positive semidefinite, with
$\|R\|_{\mathrm{op}}\le\rho^m$ and $|R_{ij}|\le\rho^m$.
In particular $(P_G^m)_{ii}\ge1/|C(i)|>0$, so the decoder is defined.
The common factor $u$ cancels in the normalized score.

\paragraph{Pairs in the same component.}
For $i,j$ in a component of size $s$, the score is bounded below by
\[
\frac{1/s+z/N-\rho^m}{1/s+z/N+\rho^m},
\]
provided its numerator is positive. It exceeds $\tau$ if
$(1+\tau)\rho^m<(1-\tau)/s$, which is guaranteed by
$\mathsf{Hard}(G)<T$.

\paragraph{Pairs in different components.}
Here $(P_G^m)_{ij}=0$. Using the diagonal lower bounds, the score
is at most
\[
\frac{z}{N/s_{\max}+z}<\tau
\quad\text{when}\quad
z<\frac{\tau N}{(1-\tau)s_{\max}}.
\]
Substituting
\[
z=\left(1+\frac{\delta_K}{\kappa}\right)^{
(3^T-1)/(3^K-1)}-1
\]
gives $T<T_{\max}(G;K)$ whenever $\delta_K>0$.
If $\delta_K=0$, then $z=0$ at every depth and no upper restriction
is needed. A connected graph has no cross-component pairs, so that
restriction may also be omitted in this case.
\end{proof}

\paragraph{Degenerate spectra.}
When $P_G=\Pi_G$, all non-unit eigenvalues are zero (or no such
eigenvalues exist). We set $\mathsf{Hard}(G)=-\infty$ in this case.
The within-component condition is then automatic, while the
cross-component restriction, when relevant, is unchanged.

\paragraph{Eventual false connections.}
For $\delta_K>0$, one has $\gamma_*<1$ and $H_T\to E$ as
$T\to\infty$. Every normalized score therefore tends to one.
More explicitly, $(P_G^m)_{ii}\le1$ bounds each cross-component
score below by $z/(N+z)$. Thus every such pair is predicted connected
once
\begin{equation}
\label{eq:late-failure}
\begin{aligned}
3^T>1
&+\frac{3^K-1}{\log(1+\delta_K/\kappa)}\\
&\!\cdot\log\!\left(1+\frac{\tau N}{1-\tau}\right).
\end{aligned}
\end{equation}
This sufficient failure condition is distinct from the upper
endpoint of the success window.

\subsection{Explicit certified region}
\label{app:solvable-region}

\begin{remark}[Integer test depths]
\label{rem:integer-window}
For finite $\mathsf{Hard}(G)$ and $\delta_K>0$, the displayed
window contains a nonnegative integer exactly when
\[
\lfloor\mathsf{Hard}(G)\rfloor+1
<
T_{\max}(G;K).
\]
For extrapolation strictly beyond the training depth, the condition is
\[
\max\!\left\{
K,\lfloor\mathsf{Hard}(G)\rfloor
\right\}+1
<
T_{\max}(G;K).
\]
Both endpoints in these tests belong to the same graph. The
explicit region in $(s_{\max},\rho)$ and degenerate cases follow.
\end{remark}

Assume $\delta_K>0$ and $0<\rho<1$ in this subsection. For brevity
only here, put $s=s_{\max}$ and define
\[
A(s)=\log s+c_{\rm in},
\qquad
B(s)=\log\!\left(1+\frac{\tau N}{(1-\tau)s}\right).
\]
The open continuous window is nonempty exactly when
\begin{equation}
\label{eq:continuous-region}
\frac{A(s)}{-\log\rho}
<1+\frac{(3^K-1)B(s)}{\log(1+\delta_K/\kappa)}.
\end{equation}
Equivalently, the boundary in the $(s,\rho)$ plane is
\begin{equation}
\label{eq:rho-region}
\rho<\exp\!\left[
-\frac{A(s)\log(1+\delta_K/\kappa)}
{\log(1+\delta_K/\kappa)+(3^K-1)B(s)}
\right].
\end{equation}
For fixed $K$, $A(s)$ increases and $B(s)$ decreases with $s$;
the boundary therefore decreases with component size.

\paragraph{Integer test loops.}
Write $h=\mathsf{Hard}(G)$ and $b=T_{\max}(G;K)>0$.
The smallest nonnegative integer strictly above $h$ is
$\max\{0,\lfloor h\rfloor+1\}$. Since $b>0$, an integer
$T\ge0$ with $h<T<b$ exists exactly when
\begin{equation}
\label{eq:integer-window}
\lfloor h\rfloor+1<b,
\quad\text{equivalently}\quad
h<\lceil b\rceil-1.
\end{equation}
The associated exact integer-window region is
\begin{equation}
\label{eq:integer-rho-region}
\rho<\exp\!\left[
-\frac{A(s)}{3^{\lceil T_{\max}(G;K)\rceil-1}}
\right].
\end{equation}
If $T>K$ is additionally required, the exact condition is
$\max\{K,\lfloor h\rfloor\}+1<b$.

For a fixed graph, $\lceil T_{\max}(G;K)\rceil-1$ is the
largest integer strictly below its upper endpoint. It is a
\emph{pointwise} quantity: changing the test component size changes
that endpoint. Equations~\eqref{eq:integer-window} and
\eqref{eq:integer-rho-region} characterize when the sufficient
window contains an integer depth. For $\mathsf{Hard}(G)=-\infty$, the lower restriction is
absent; for $\delta_K=0$, the upper restriction is absent. In
particular, the floor expression is not applied to $-\infty$.

\subsection{A test depth uniform over test graphs}
\label{app:uniform-extrapolation}

Fix the training law, $N$, and $K\ge1$.
To select a depth without inspecting the test graph, we replace \(s_{\max}(G)\) by its universal bound \(N\) in the upper endpoint. Thus \(\underline T_K\) is a common lower bound on all graph-specific upper endpoints, and \(T_K^\circ\) is the largest integer strictly below this common bound.
Suppose $\delta_K>0$ and define
\begin{equation}
\label{eq:uniform-upper}
\underline T_K
:=\log_3\!\left[
1+\frac{(3^K-1)c_{\rm out}}
{\log(1+\delta_K/\kappa)}
\right],
\qquad
T_K^\circ:=\lceil\underline T_K\rceil-1.
\end{equation}
These quantities depend only on the training problem. The integer $T_K^\circ$ is nonnegative because
$\underline T_K>0$.

\begin{lemma}[A common inference depth]
\label{lem:uniform-test-depth}
At the population training optimum, the same depth $T_K^\circ$
recovers every connectivity label of every graph $G\in\mathcal G_N$
with $\mathsf{Hard}(G)<T_K^\circ$.
If $T_K^\circ>K$, this is depth extrapolation.
\end{lemma}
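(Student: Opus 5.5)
The plan is to reduce the lemma to Proposition~\ref{prop:test-window}. Fix any $G\in\mathcal G_N$ with $\mathsf{Hard}(G)<T_K^\circ$, and set $T=T_K^\circ$. By construction $T$ is a nonnegative integer. The lower condition $\mathsf{Hard}(G)<T$ in~\eqref{eq:test-window} is exactly the hypothesis. If $\mathsf{Hard}(G)=-\infty$, the within-component requirement is automatic, as noted in the paragraph on degenerate spectra. So it remains to verify the upper condition $T<T_{\max}(G;K)$ for every such $G$ simultaneously.

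The key step is a monotonicity comparison between the graph-specific endpoint~\eqref{eq:test-upper} and the common bound~\eqref{eq:uniform-upper}. Since every component has at most $N$ vertices, $s_{\max}\le N$. Hence
\[
\log\!\left(1+\frac{\tau N}{(1-\tau)s_{\max}}\right)\ge\log\!\left(1+\frac{\tau}{1-\tau}\right)=\log\frac1{1-\tau}=c_{\rm out}.
\]
Both $\log_3$ and the affine map $x\mapsto 1+(3^K-1)x/\log(1+\delta_K/\kappa)$ are increasing, because $\delta_K>0$ gives a positive denominator. It follows that $T_{\max}(G;K)\ge\underline T_K$.

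Next, $T_K^\circ=\lceil\underline T_K\rceil-1$ is strictly less than $\underline T_K$, since the largest integer strictly below a real $b$ is $\lceil b\rceil-1$. Combining the two gives $T_K^\circ<\underline T_K\le T_{\max}(G;K)$. Both endpoints of~\eqref{eq:test-window} therefore hold, and Proposition~\ref{prop:test-window} yields exact recovery of all labels of $G$ at depth $T_K^\circ$. The same proposition covers connected test graphs, where the upper restriction is unnecessary anyway.

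There is one small bookkeeping point, and it is the only place needing care. The nonnegativity $\underline T_K>0$, equivalently $T_K^\circ\ge0$, holds because the argument of $\log_3$ exceeds $1$ when $c_{\rm out}>0$. This makes $T_K^\circ$ an admissible depth. The final sentence of the lemma, that $T_K^\circ>K$ constitutes depth extrapolation, is definitional: the frozen block is then run for more loops than at training. No genuine obstacle arises; the main thing to check is that the $s_{\max}$-dependence enters only through the upper endpoint and monotonically, so that replacing $s_{\max}$ by $N$ is the worst case.
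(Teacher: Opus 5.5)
Your proposal is correct and follows the paper's proof essentially step for step. The bound $s_{\max}(G)\le N$ gives $T_{\max}(G;K)\ge\underline T_K$ for all graphs at once, so $\mathsf{Hard}(G)<T_K^\circ<\underline T_K\le T_{\max}(G;K)$, and Proposition~\ref{prop:test-window} then applies at the common integer depth, with the connected and degenerate-spectrum cases handled just as you describe. You also spell out two details the paper leaves implicit or states just before the lemma: that $T_K^\circ\ge0$, and that the endpoint is monotone in the logarithmic factor because $\delta_K>0$.
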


\begin{proof}
For every $G\in\mathcal G_N$, the bound $s_{\max}(G)\le N$ gives
\[
\log\!\left(1+\frac{\tau N}{(1-\tau)s_{\max}(G)}\right)
\ge c_{\rm out}.
\]
Consequently $T_{\max}(G;K)\ge\underline T_K$ for all test
graphs simultaneously. If $\mathsf{Hard}(G)<T_K^\circ$, then
\begin{equation}
\label{eq:uniform-solvability-chain}
\mathsf{Hard}(G)<T_K^\circ<\underline T_K
\le T_{\max}(G;K).
\end{equation}
Proposition~\ref{prop:test-window} now applies to this same graph
at this same integer depth. Its connected and degenerate-spectrum
cases also satisfy the conclusion.
\end{proof}

The rounding relation
\begin{equation}
\label{eq:hardness-cutoff}
\underline T_K-1\le T_K^\circ<\underline T_K
\end{equation}
changes the common sufficient depth by at most one. When $\delta_K=0$, any chosen integer $T$ works for all
graphs satisfying $\mathsf{Hard}(G)<T$.

\subsection{Monotonicity with training depth}
\label{app:proof:prop:monotonicity}

\begin{proposition}[More training loops preserve the guarantees]
\label{prop:monotonicity}
For a fixed training law and its population optima, increasing $K$
makes $\delta_K$ nonincreasing and $\gamma_*$ nondecreasing,
and preserves every test depth certified by
Proposition~\ref{prop:test-window}.
\end{proposition}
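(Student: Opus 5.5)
The argument has three steps, carried out in order: monotonicity of $\delta_K$ in $K$, then monotonicity of $\gamma_*$, and finally the claim about certified test depths, which I read through the window of Proposition~\ref{prop:test-window}. Throughout, the training law, and hence $\kappa>0$, stays fixed.

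\textbf{Step 1: $\delta_K$ is nonincreasing.} By~\eqref{eq:statistics}, $\delta_K=\mathbb E_G\sum_{i>c_G}\lambda_i^{2\cdot 3^K}$. Proposition~\ref{prop:spectrum} gives $0\le\lambda_i<1$ for every non-unit eigenvalue, so each summand $\lambda_i^{2\cdot3^K}$ is nonincreasing in $K$ pointwise for each graph. Taking expectations preserves this.

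\textbf{Step 2: $\gamma_*$ is nondecreasing.} I would rewrite~\eqref{eq:opt} as
\[
\log\gamma_*=-\frac{2\log(1+\delta_K/\kappa)}{3^K-1}.
\]
The numerator $\log(1+\delta_K/\kappa)\ge0$ is nonincreasing in $K$ by Step 1. The denominator $3^K-1$ is positive and strictly increasing. Hence $-\log\gamma_*\ge0$ is nonincreasing, so $\gamma_*\le1$ is nondecreasing. If $\delta_K=0$ for some $K$, then $\gamma_*=1$ there and at every larger $K$, and the claim is trivial.

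\textbf{Step 3: certified depths are preserved.} In the window $\mathsf{Hard}(G)<T<T_{\max}(G;K)$, the lower endpoint depends only on $G$ and $\tau$, not on $K$. So it suffices to show that $T_{\max}(G;K)$ in~\eqref{eq:test-upper} is nondecreasing in $K$. The quantity $\log(1+\tau N/((1-\tau)s_{\max}))$ is a positive constant for fixed $G$. The factor
\[
\frac{3^K-1}{\log(1+\delta_K/\kappa)}=\frac{2}{-\log\gamma_*}
\]
is nondecreasing by Step 2. This factor becomes $+\infty$ exactly when $\delta_K=0$, matching the convention $T_{\max}=+\infty$, and once $\delta_K=0$ it stays $0$ for all larger $K$. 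Since $\log_3(1+\cdot)$ is monotone, $T_{\max}(G;K)$ is nondecreasing, and any $T$ certified at depth $K$ stays certified at depth $K+1$. The same monotonicity carries over to $\underline T_K$ and $T_K^\circ$ in Lemma~\ref{lem:uniform-test-depth}.

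\textbf{Main obstacle.} The only delicate point is the form of the upper endpoint. Both $3^K-1$ and $\log(1+\delta_K/\kappa)$ move with $K$, so one cannot argue term by term. Rewriting their ratio through $\gamma_*$ turns Step 3 into Step 2, and the degenerate case $\delta_K=0$ needs the separate remark above. Everything else is routine.
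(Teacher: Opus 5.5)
Your proof is correct and follows essentially the same route as the paper: pointwise monotonicity of $\lambda_i^{2\cdot 3^K}$ for $\delta_K$, then the identity $-\log\gamma_*=2\log(1+\delta_K/\kappa)/(3^K-1)$ for $\gamma_*$, then a $K$-independent lower endpoint combined with a nondecreasing upper endpoint for the window. Rewriting the factor in $T_{\max}(G;K)$ as $2/(-\log\gamma_*)$ makes explicit a step the paper leaves implicit, and your treatment of the degenerate case $\delta_K=0$ is sound.
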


\begin{proof}
Every non-unit eigenvalue lies in $[0,1)$, so the spectral expression
in~\eqref{eq:statistics} is nonincreasing in $K$.
Also
\[
-\log\gamma_*=
\frac{2\log(1+\delta_K/\kappa)}{3^K-1}
\]
is nonincreasing. For each test graph, the lower endpoint is
independent of $K$ and the upper endpoint is nondecreasing.
The same inclusion holds after restricting to integer test depths.
The common depths in~\eqref{eq:uniform-upper} are also
nondecreasing in $K$ at the population optimum.
\end{proof}

\subsection{Proof of Theorem~\ref{thm:hardness-gap}}
\label{app:hardness-gap-proof}

We first state the regularity condition used in
Theorem~\ref{thm:hardness-gap}.

\begin{definition}[Spectrally regular training family]
\label{def:spectral-regularity}
Let $\mathcal D_j$ be exchangeable distributions over
$\mathcal G_{N_j}$ with training depths $K_j\ge1$,
$\kappa_j>0$, and $\delta_j>0$. Write
\[
H_j=\mathsf{Hard}_{\rm tr}(\mathcal D_j),
\qquad
r_j=-\log\rho_{{\rm tr},j},
\]
Here $H_j$ is the maximum hardness over graphs of positive training
probability, $\rho_{{\rm tr},j}\in(0,1)$ is the largest nonunit
eigenvalue over the training support, and $\delta_j$ is
the spectral residual at depth $K_j$. The maxima defining \(H_j\) and \(\rho_{\mathrm{tr},j}\) need not be attained by the same graph; the first regularity condition supplies the comparison used below.

The family is spectrally regular at these training depths
if there exist constants
$0<a_-\le a_+<\infty$ and
$0<z_-\le z_+<\infty$, independent of $j$, such that
\[
a_-H_j
\le
r_j3^{H_j}
\le
a_+H_j,
\]
and
\[
z_-
\le
\frac{
e^{2r_j3^{K_j}}\delta_j
}{
\kappa_j
}
\le
z_+.
\]
\end{definition}

The first condition connects training hardness to the slowest
nontrivial spectral mode. The second requires this mode to control
the residual training error up to constant factors.

\begin{proof}[Proof of Theorem~\ref{thm:hardness-gap}]
Spectral regularity makes the normalized training residual exponentially small in \(R=H3^{K-H}\). The common depth bound depends logarithmically on the inverse residual, producing an additional depth of order \(R\). Lemma 2 then turns this depth into a simultaneous guarantee for all test graphs below its hardness cutoff. We suppress the family index \(j\) and write \(\delta=\delta_K\) below.

\paragraph{1. Training controls a graph-independent depth.}
Put $R=H3^{K-H}$. By spectral regularity,
\[
a_-R\le r3^K\le a_+R,
\qquad
z_-e^{-2r3^K}\le\frac\delta\kappa
\le z_+e^{-2r3^K}.
\]
Since $R\to\infty$, $\delta/\kappa\to0$ and
\begin{equation}
\label{eq:residual-log-rate}
-\log\log(1+\delta/\kappa)=2r3^K+O(1).
\end{equation}
The $O(1)$ is uniform: the multiplicative residual factor is between
$z_-$ and $z_+$, and $\log(1+x)/x\to1$.
Using~\eqref{eq:uniform-upper} and
$\log_3(3^K-1)=K+O(1)$ uniformly for $K\ge1$ gives
\begin{equation}
\label{eq:tmax-sharp}
\underline T_K
=K+\frac{2r3^K}{\log3}+O(1),
\qquad
T_K^\circ
=K+\frac{2r3^K}{\log3}+O(1).
\end{equation}
In particular, $T_K^\circ-K=\Theta(R)\to\infty$.

\paragraph{2. Verify the common test window.}
Choose $c=a_-/\log3>0$. For sufficiently large indices,
\[
T_K^\circ\ge K+\frac{2a_-}{\log3}R-O(1)
>K+cR>K.
\]
For an arbitrary $G\in\mathcal G_N$ satisfying
$\mathsf{Hard}(G)<T_K^\circ$, Lemma~\ref{lem:uniform-test-depth} applies. In particular, for the smaller cutoff $\mathsf{Hard}(G)<K+cR$, we have
\[
\max\{K,\mathsf{Hard}(G)\}<T_K^\circ
<\underline T_K\le T_{\max}(G;K).
\]
Lemma~\ref{lem:uniform-test-depth}, or directly
Proposition~\ref{prop:test-window}, gives exact connectivity
recovery at $T_K^\circ$. The chosen depth depends only on the
training problem, so the conclusion holds simultaneously for all
such test graphs.

The constants otherwise depend only on $a_-,a_+,z_-,z_+$.
\end{proof}

Theorem~\ref{thm:hardness-gap} uses the lower bound in
\eqref{eq:tmax-sharp} to certify test graphs. The path example below
realizes this guarantee with growing train and test graph families.

\subsection{High-probability shallow-to-deep extrapolation}
\label{app:probabilistic-extrapolation}

\begin{proof}[Proof of Theorem~\ref{thm:probabilistic-extrapolation}]
Work along a sequence of training problems satisfying
Theorem~\ref{thm:gf}.
All limits below are joint in $N,K,H,\varepsilon$.
Let $\mathcal A_H=\{G:\mathsf{Hard}(G)\le H\}$, so that
$\Pr_{\mathcal D_{\rm tr}}(\mathcal A_H^c)\le\varepsilon$.

\paragraph{1. Bound the population residual.}
For a nondegenerate graph in $\mathcal A_H$, the definition of
hardness implies
\[
-\log\rho(G)
\ge 3^{-H}\bigl[\log s_{\max}(G)+c_{\rm in}\bigr]
\ge c_{\rm in}3^{-H}.
\]
Its squared spectral residual is therefore at most
$N\exp(-2c_{\rm in}3^{K-H})$. Degenerate graphs have residual zero.
On $\mathcal A_H^c$, it is at most $N$. Taking expectations gives
\begin{equation}
\label{eq:delta-quantile}
\delta_K\le N\bigl[e^{-2c_{\rm in}3^{K-H}}+\varepsilon\bigr].
\end{equation}
Let $M=\min\{3^{K-H},\log(1/\varepsilon)\}$ and
$\beta=\min\{2c_{\rm in},1\}>0$. Then
\begin{equation}
\label{eq:delta-quantile-exp}
\delta_K\le2N e^{-\beta M},
\end{equation}
and $\log(1+x)\le x$ implies
\begin{equation}
\label{eq:log-delta-quantile}
\log(1+\delta_K/\kappa)
\le\frac{2N}{\kappa}e^{-\beta M}.
\end{equation}

\paragraph{2. Obtain a common upper-endpoint lower bound.}
Suppose first $\delta_K>0$. Substitution into
\eqref{eq:uniform-upper} yields
\begin{equation}
\label{eq:tmax-quantile}
\underline T_K\ge
\log_3\!\left[
\frac{(3^K-1)\kappa c_{\rm out}}{2N}
 e^{\beta M}\right].
\end{equation}
Consequently
\begin{equation}
\label{eq:tmax-quantile-rate}
\underline T_K\ge K+\frac{\beta}{\log3}M
-\log_3(N/\kappa)-C,
\end{equation}
for a fixed finite constant $C$.
Because $M\to\infty$ and $\log(N/\kappa)=o(M)$, eventually
\[
\underline T_K\ge K+\frac{\beta}{2\log3}M.
\]
The constant is uniform over all test graphs on $N$ vertices.

\paragraph{3. Choose one integer depth and verify both sides.}
Set $c=\beta/(4\log3)$, $R_K=K+c M$, and
$T_K^{\rm hp}=\lfloor R_K\rfloor+1$.
For sufficiently large indices, $c M>1$, so
\[
K<R_K<T_K^{\rm hp}\le R_K+1
<K+2c M\le\underline T_K.
\]
Every test graph satisfying $\mathsf{Hard}(G)<R_K$ therefore obeys
\begin{equation}
\label{eq:hp-solvability-chain}
\mathsf{Hard}(G)<R_K<T_K^{\rm hp}
<\underline T_K\le T_{\max}(G;K).
\end{equation}
More generally, every graph with $\mathsf{Hard}(G)<T_K^{\rm hp}$ satisfies the same window directly, proving the stated cutoff. Proposition~\ref{prop:test-window} gives exact recovery at this
common integer depth. If $\delta_K=0$, the upper restriction is
absent and the same choice of $T_K^{\rm hp}$ works directly.

Finally, for any test law $\mathcal D_{\rm te}$ on
$\mathcal G_N$, the deterministic inclusion just proved gives
\[
\Pr_{G\sim\mathcal D_{\rm te}}\!\bigl(
\widehat Y(T_K^{\rm hp})=Y_G\bigr)
\ge
\Pr_{G\sim\mathcal D_{\rm te}}\!\bigl(\mathsf{Hard}(G)<R_K\bigr).
\]
Thus mass at least $1-\zeta$ below $R_K$ gives exact-match accuracy
at least $1-\zeta$ at the training-selected depth. The training tail
probability controls the population residual.
\end{proof}

\subsection{Short to Long Path Generalization}
\label{app:spectral-refinement}

We prove Example~\ref{ex:path-generalization} in three steps: first
verify the assumptions of Theorem~\ref{thm:hardness-gap} for the
spectrally regular packed-path families, then retain the constants
in its window argument, and finally recover the path-length
statement $1<q<1+3^\Delta$ in the main text. The refinement also
covers packed-path families whose leftover isolated vertices do
not satisfy the regularity assumption.

\begin{proof}[Proof of Example~\ref{ex:path-generalization}]\leavevmode\par
\paragraph{1. Verify the theorem's assumptions.}

Let $P_v$ denote a path on $v$ vertices and define
\[
G_{N,v}=\lfloor N/v\rfloor P_v
\sqcup\bigl(N-v\lfloor N/v\rfloor\bigr)P_1.
\]
Fix $s\ge3$, $N\ge2s$, and $K\ge1$.
Train on uniform vertex relabelings of $G_{N,s}$ under the
assumptions of Theorem~\ref{thm:gf}, and evaluate the population-optimal
block on $G_{N,L}$, $2\le L\le N$.
Write $\ell=\lfloor N/s\rfloor$ and $b=N-\ell s$.
For a path component of size $v\ge2$ under the global normalization
$I-\mathcal L/4$, define
\begin{equation}
\label{eq:path-example-spectrum}
\lambda_{v,j}=\cos^2\!\left(\frac{\pi j}{2v}\right),\qquad
R_v(K)=\sum_{j=1}^{v-1}\lambda_{v,j}^{\,2\cdot3^K},
\end{equation}
and set $R_0(K)=R_1(K)=0$.
The training graph contains an $s$-vertex path, so its maximum degree
is two. Its exact normalized residual is
\begin{equation}
\label{eq:path-example-training-residual}
\kappa=\ell+b-1,\qquad
D_{N,s,K}:=\delta_K/\kappa
=\frac{\ell R_s(K)}{\ell+b-1}\le2R_s(K).
\end{equation}
The inequality follows from $\ell\ge2$.

For $L\ge3$, the largest component and slowest non-unit mode give
\[
h(L):=\mathsf{Hard}(G_{N,L})
=\log_3\!\left(
\frac{\log L+c_{\rm in}}{-\log\lambda_{L,1}}
\right).
\]
This quantity is independent of the number of path components and
strictly increases with $L$. Set $h(2)=-\infty$: when $L=2$,
the test graph has maximum degree one and its actual input operator
already equals its component projector.

Let $H_s=h(s)$, $r_s=-\log\lambda_{s,1}$, and
$t_s=r_s3^{K_s}$. For the regime in the example,
$K_s-H_s\to\Delta\in\mathbb R$. The path spectrum gives
\[
r_s\sim\frac{\pi^2}{4s^2},\qquad
r_s3^{H_s}=\log s+c_{\rm in},\qquad
H_s\sim2\log_3s.
\]
Thus $H_s\to\infty$, $H_s3^{K_s-H_s}\to\infty$, and
$r_s3^{H_s}/H_s\to(\log3)/2$, verifying the first spectral
regularity condition. Also $t_s\sim3^\Delta\log s\to\infty$.
For $r_{s,j}=-\log\lambda_{s,j}$, monotonicity of $\tan x/x$
implies $r_{s,j}\ge j^2r_s$. Consequently,
\begin{equation}
\label{eq:path-example-mode-dominance}
e^{-2t_s}\le R_s(K_s)
\le\frac{e^{-2t_s}}{1-e^{-6t_s}},
\end{equation}
since $j^2-1\ge3(j-1)$. Hence
\[
e^{2t_s}\frac{\delta_{K_s}}\kappa
=\frac{\ell}{\ell+b-1}(1+o(1)).
\]
If $b=O(\ell)$, this ratio is bounded above and away from zero,
which verifies the second spectral regularity condition. This
includes $s\mid N$, where $b=0$. Uniform relabeling makes the
training law exchangeable, and $\kappa=\ell+b-1>0$; with the
positive initialization in Theorem~\ref{thm:gf}, all the remaining
training assumptions hold. Theorem~\ref{thm:hardness-gap} therefore
gives a common depth with $T-K=\Theta(H_s3^{K-H_s})$ for this
regular subclass.

The condition $b=O(\ell)$ is needed for this invocation: if the
isolated remainder dominates the number of paths, the normalized
residual need not satisfy the required lower bound. The sharper
argument below uses only $\delta_K/\kappa\le2R_s(K)$, so it applies
to every $N\ge2s$ and establishes the full example without that
extra restriction.

\paragraph{2. Refine the common depth, retaining the constants.}
The \(\Theta\)-rate in Theorem 2 does not determine the exponent \(1+3^\Delta\) in Example 1. We therefore retain the leading constant in the depth bound before translating depth into path length.
Define
\begin{equation}
\label{eq:packed-path-depth}
\begin{aligned}
\bar B_{s,K}
&=\log_3\!\left[
1+\frac{(3^K-1)c_{\rm out}}{\log(1+2R_s(K))}
\right],\\
\bar T_{s,K}&=\lceil\bar B_{s,K}\rceil-1,
\end{aligned}
\end{equation}
and
\begin{equation}
\label{eq:packed-path-cap}
L_{\rm cert}(N,s,K)
=\max\{2\le L\le N:h(L)<\bar T_{s,K}\},
\end{equation}
where the maximum is over integers. The learned block recovers every
label of every $G_{N,L}$ with $2\le L\le L_{\rm cert}(N,s,K)$,
and every relabeling, at the same depth $\bar T_{s,K}$.

Let $s\to\infty$, $N_s\ge2s$, $r_s=-\log\lambda_{s,1}$,
and $t_s=r_s3^{K_s}$. If $t_s\to\infty$, then
\begin{equation}
\label{eq:path-example-uniform-depth}
\bar T_{s,K_s}=K_s+\frac{2t_s}{\log3}+O(1),
\end{equation}
and, uniformly over $N_s$,
\begin{equation}
\label{eq:packed-path-length-rate}
L_{\rm cert}(N_s,s,K_s)
=\Theta\!\left(
\min\left\{N_s,
\frac{3^{K_s/2}e^{t_s}}{\sqrt{K_s+t_s}}\right\}\right).
\end{equation}
We establish these bounds directly from the same window condition
used in Theorem~\ref{thm:hardness-gap}.
For every feasible $L$, its largest component has size $L\le N$.
Consequently,
\[
\log\!\left(1+\frac{\tau N}{(1-\tau)L}\right)
\ge c_{\rm out}.
\]
Combining this inequality with
$D_{N,s,K}\le2R_s(K)$ and Proposition~\ref{prop:test-window} gives
\begin{equation}
\label{eq:packed-path-window-chain}
h(L)<\bar T_{s,K}<\bar B_{s,K}
\le T_{\max}(G_{N,L};K)
\end{equation}
whenever $h(L)<\bar T_{s,K}$ and the test graph is disconnected.
For a connected test graph, the upper restriction is unnecessary.
The case $L=2$ has no lower restriction; the upper bound remains
valid. This proves the finite guarantee, including relabelings by
permutation equivariance.

For the asymptotic refinement, \eqref{eq:path-example-mode-dominance} gives
\begin{equation}
\label{eq:path-example-log-residual}
-\log\log(1+2R_s(K_s))=2t_s+O(1).
\end{equation}
Substitution into~\eqref{eq:packed-path-depth} proves
\eqref{eq:path-example-uniform-depth}, including integer rounding.

Taylor expansion yields
\begin{equation}
\label{eq:path-example-hardness}
\begin{aligned}
-\log\lambda_{L,1}&=\frac{\pi^2}{4L^2}(1+O(L^{-2})),\\
h(L)&=2\log_3L+\log_3\log L
+\log_3(4/\pi^2)+o(1).
\end{aligned}
\end{equation}
If $\Lambda(T)=\max\{L\ge2:h(L)<T\}$ over integers,
inverting this expansion gives
\[
\Lambda(T)
=\frac{\pi}{\sqrt{2\log3}}\frac{3^{T/2}}{\sqrt T}(1+o(1)).
\]
Since $L_{\rm cert}(N,s,K)=\min\{N,\Lambda(\bar T_{s,K})\}$,
\eqref{eq:packed-path-length-rate} follows.

\paragraph{3. Recover the path-length statement in the main text.}
Suppose $s\to\infty$, $N_s\ge2s$, and
$K_s-H_s\to\Delta\in\mathbb R$, where $H_s=h(s)$.
Then $\bar T_{s,K_s}>K_s$ eventually and
\begin{equation}
\label{eq:packed-path-exponent}
\frac{\log L_{\rm cert}(N_s,s,K_s)}{\log s}
=\min\left\{\frac{\log N_s}{\log s},1+3^\Delta\right\}+o(1).
\end{equation}
In particular, for any fixed $1<q<1+3^\Delta$, the same depth
solves every feasible integer size
$2\le L\le\min\{N_s,\lfloor s^q\rfloor\}$ for all sufficiently
large $s$. It also solves every feasible sequence
$L_s=s^{q+o(1)}\le N_s$ with this strict exponent gap.
To see this explicitly, write $A_s=\log s+c_{\rm in}$. Exactly,
$r_s3^{H_s}=A_s$, so
$t_s=A_s3^{K_s-H_s}\sim3^\Delta\log s$ and
$K_s=(2/\log3)\log s+o(\log s)$.
The depth and length formulas above imply the exponent and
$\bar T_{s,K_s}-K_s\to\infty$. For any fixed $q$ below the stated
boundary, $h(s^{q+o(1)})/H_s\to q$, whereas
$\bar T_{s,K_s}/H_s\to1+3^\Delta$. The strict lower-window condition
therefore holds eventually. Taking $T=\bar T_{s,K_s}$, we have shown
exactly the implication stated in the main text:
\[
\begin{gathered}
K_s-H_s\to\Delta,\qquad L_s=s^{q+o(1)}\le N_s,\qquad 1<q<1+3^\Delta
\\[2pt]
\Longrightarrow\quad
\widehat Y_{G_{N_s,L_s}}(T)=Y_{G_{N_s,L_s}}\quad\text{at one common }T>K_s.
\end{gathered}
\]
Limits are along admissible integer
sequences. Although $N$ is fixed within each comparison, any limit
$s\to\infty$ with $N_s\ge2s$ necessarily has $N_s\to\infty$.
\end{proof}

\subsection{ER-to-ER generalization}
\label{app:er-generalization}

The proof separates two requirements. A bound on the training residual determines one depth \(T_K\) that controls false connections for every test graph. A component-size bound then identifies the graphs for which this depth provides sufficient propagation. Combining this size budget with the ER component tail yields the density interval in Example 2. Training and testing
use the same $n\ge3$ vertices, and $0<c_{\rm tr}:=np_{\rm tr}<1$.

\begin{proof}[Proof of Example~\ref{ex:er-generalization}]\leavevmode\par
\paragraph{1. Verify the theorem's assumptions.}
The ER law is invariant under vertex relabeling. Moreover, the
lower bound on $\kappa$ below is strictly positive. Thus the
exchangeability and nondegeneracy conditions of
Theorem~\ref{thm:gf} hold; we retain its positive initialization
assumption and evaluate the population-optimal block.
To verify the hardness-tail assumption, we first relate component
size to spectral decay.

\emph{Component-size tail.} For $c\in(0,1)$, write $\alpha(c)=c-1-\log c$.
If $G\sim G(n,c/n)$, then for every integer $S\ge1$,
\begin{equation}
\label{eq:er-example-size-tail}
\Pr(s_{\max}(G)>S)\le n e^{-S\alpha(c)}.
\end{equation}
To prove this, explore the component of one vertex. The number
of new vertices found when processing each explored vertex is
dominated by an independent $\operatorname{Bin}(n,c/n)$ variable,
using independent padding variables for unqueried positions.
A component larger than $S$ requires at least $S$ discoveries
while its first $S$ vertices are processed. For the dominating
variables $Z_1,\ldots,Z_S$ and any $t>0$,
\[
\Pr\!\left(\sum_{i=1}^{S}Z_i\ge S\right)
\le
\exp\!\left[S\{-t+c(e^t-1)\}\right].
\]
Choosing $t=-\log c$ gives $e^{-S\alpha(c)}$. A union bound over
the root proves~\eqref{eq:er-example-size-tail}. For $S\ge n$ the
bound holds immediately.

\emph{From component size to hardness.}
If every component of an unweighted graph has size at most $S\ge2$,
then, for the normalization used in this paper,
\begin{equation}
\label{eq:er-example-gap}
\|P_G-\Pi_G\|_{\rm op}
\le 1-\frac{2}{S(S-1)^2}.
\end{equation}
Indeed, on a component with $s\ge2$ vertices, a mean-zero vector
$v$ with range $R$ satisfies
$\|v\|_2^2\le sR^2/4$. A path of length at most $s-1$ joining
its minimum and maximum entries gives
\[
v^\top L_Gv\ge\frac{R^2}{s-1}.
\]
Thus the smallest positive component Laplacian eigenvalue is
at least $4/[s(s-1)]$. The global maximum degree is at most
$S-1$, proving~\eqref{eq:er-example-gap} after normalization.
Consequently, for every integer $m\ge1$,
\begin{equation}
\label{eq:er-example-residual}
\|P_G^m-\Pi_G\|_{\rm op}
\le
\exp\!\left[-\frac{2m}{S(S-1)^2}\right].
\end{equation}
Singleton components contribute no non-unit modes.

Define, for integer $S\ge1$,
\begin{equation}
\label{eq:er-example-F}
F(S):=
\frac{S(S-1)^2}{2}
\bigl(\log S+c_{\rm in}\bigr).
\end{equation}
In particular $F(1)=0$. The within-component condition
in Proposition~\ref{prop:test-window} holds for every graph
with largest component at most $S$ whenever $F(S)<3^T$.
For $S=1$, within-component correctness is automatic.

\emph{Training cutoff and the remaining asymptotic conditions.}
Put $c_{\rm tr}=np_{\rm tr}$ and define
\begin{equation}
\label{eq:er-example-training-size}
S_{\rm tr}:=
\min\!\left\{
n,\,
\max\!\left\{
2,\left\lceil
\frac{\log(n/\varepsilon)}{\alpha(c_{\rm tr})}
\right\rceil
\right\}
\right\}.
\end{equation}
By~\eqref{eq:er-example-size-tail},
$\Pr(s_{\max}(G_{\rm tr})>S_{\rm tr})\le\varepsilon$.
If the definition truncates at $n$, the exceptional event is empty.

For every graph, $c_G\ge n-|E(G)|$. Hence
\begin{equation}
\label{eq:er-example-kappa}
\begin{aligned}
\kappa
&\ge \underline\kappa_{\rm tr},\\
\underline\kappa_{\rm tr}
&:=(n-1)\left(1-\frac{c_{\rm tr}}2\right)>0.
\end{aligned}
\end{equation}
In particular $n/\underline\kappa_{\rm tr}\le3$ for $n\ge3$.
Set $H_{\rm tr}=\log_3 F(S_{\rm tr})$. For nondegenerate
graphs with $s_{\max}\le S_{\rm tr}$, the spectral bound implies
$\mathsf{Hard}(G)\le H_{\rm tr}$; degenerate graphs impose no
lower-depth restriction. Therefore
\[
\Pr_{\mathcal D_{\rm tr}}
 (\mathsf{Hard}(G)>H_{\rm tr})\le\varepsilon.
\]
Along any sequence for which
\[
M_{\rm thm}:=\min\left\{
\frac{3^K}{F(S_{\rm tr})},\log\frac1\varepsilon
\right\}\longrightarrow\infty,
\]
the last condition of Theorem~\ref{thm:probabilistic-extrapolation}
also holds: $\log(n/\kappa)\le\log3=o(M_{\rm thm})$.
The theorem consequently supplies one training-selected depth
$T>K$ with $T-K=\Theta(M_{\rm thm})$, solving all test graphs
with $\mathsf{Hard}(G)<T$. This invocation is asymptotic; it does
not assert $T>K$ for arbitrary fixed $n,K,\varepsilon$.

\paragraph{2. Refine the guarantee at finite graph size.}
We now retain the explicit component-size decay in the theorem's
residual argument. This gives a computable depth and avoids hiding
constants in the asymptotic statement.
Splitting the spectral residual according to the event
$s_{\max}\le S_{\rm tr}$ and using at most $n$ non-unit modes,
\[
\delta_K
\le
n\left\{
\exp\!\left[
-\frac{4\,3^K}{S_{\rm tr}(S_{\rm tr}-1)^2}
\right]
+\varepsilon
\right\}.
\]
Therefore the explicitly computable quantity
\begin{equation}
\label{eq:er-example-U}
\begin{aligned}
U_K:=\frac{n}{\underline\kappa_{\rm tr}}\Bigg\{
&\exp\!\left[
-\frac{4\,3^K}{S_{\rm tr}(S_{\rm tr}-1)^2}
\right]
&+\varepsilon
\Bigg\}
\end{aligned}
\end{equation}
satisfies $\delta_K/\kappa\le U_K$ and is explicit in the
training-distribution parameters.

\emph{Choose one depth and its component-size budget.}
Define
\begin{equation}
\label{eq:er-example-common-depth}
T_K:=
\left\lceil
\log_3\!\left[
1+\frac{(3^K-1)c_{\rm out}}{\log(1+U_K)}
\right]
\right\rceil-1.
\end{equation}
This is a nonnegative integer. For any test graph with
$s_{\max}\le n$,
\[
\log\!\left(
1+\frac{\tau n}{(1-\tau)s_{\max}}
\right)\ge c_{\rm out}.
\]
Together with $\delta_K/\kappa\le U_K$, this proves
$T_K<T_{\max}(G;K)$ whenever the upper restriction is relevant.
The ceiling minus one handles strictness even when the continuous
upper endpoint is an integer.

Define the integer size budget
\begin{equation}
\label{eq:er-example-size-budget}
S_K:=
\max\{S\in\{1,\ldots,n\}:F(S)<3^{T_K}\}.
\end{equation}
The set is nonempty because $F(1)=0$.
For each nondegenerate graph with $s_{\max}\le S_K$, the
size bound controls that graph's own spectral error, giving
\[
\mathsf{Hard}(G)<T_K<T_{\max}(G;K).
\]
Degenerate graphs have no lower restriction. Thus
\eqref{eq:er-example-common-depth} and
\eqref{eq:er-example-size-budget} imply exact recovery for
\emph{every} such graph at the same depth $T_K$.
For extrapolation strictly beyond training, we additionally
require $T_K>K$.

\paragraph{3. Recover the density interval in the main text.}
The function $\alpha$ is strictly decreasing from $+\infty$ to zero
on $(0,1)$, since $\alpha'(c)=1-1/c<0$.
For a subcritical test graph $G_{\rm te}\sim G(n,p_{\rm te})$,
\eqref{eq:er-example-size-tail} gives
\[
\Pr(s_{\max}(G_{\rm te})>S_K)
\le
n e^{-S_K\alpha(np_{\rm te})}.
\]
The displayed upper bound is at most $\varepsilon$ exactly when
\[
p_{\rm te}\le
\frac1n\alpha^{-1}\!\left(
\frac{\log(n/\varepsilon)}{S_K}
\right).
\]
This proves Example~\ref{ex:er-generalization}, with probability
taken over the sampled test graph.
The same certificate also applies to $p_{\rm te}=0$.

The denser-than-training interval in the example is nonempty
precisely when
\[
S_K>
\frac{\log(n/\varepsilon)}{\alpha(np_{\rm tr})}.
\]
Together with $T_K>K$, this condition gives extrapolation to
denser test graphs.

\end{proof}

\paragraph{Relation between the refinement and the theorem.}
The construction applies at finite graph size. To connect
it to the high-probability theorem, let
\[
M_K:=
\min\left\{
\frac{4\,3^K}{S_{\rm tr}(S_{\rm tr}-1)^2},
\log\frac1\varepsilon
\right\}.
\]
If $M_K\to\infty$, then $-\log U_K=M_K+O(1)$, uniformly in
$n$ and $c_{\rm tr}\in(0,1)$, and
\[
T_K=K+\frac{M_K}{\log3}+O(1).
\]
The component-size bound gives this scaling directly from the
ER distribution parameters. In contrast, the direct theorem
invocation uses $M_{\rm thm}$ above. Since
$F(S)$ also contains a logarithmic factor, these two rates
need not coincide: retaining the explicit spectral decay gives a
sharper certificate.

Before the cap $n$ is active, as $T_K\to\infty$,
\[
S_K=
\left(\frac{6\,3^{T_K}}{T_K\log3}\right)^{1/3}(1+o(1)).
\]
If $x_K:=\log(n/\varepsilon)/S_K\to0$, then the certified
density endpoint $p_K^{\rm bd}:=n^{-1}\alpha^{-1}(x_K)$ satisfies
\[
1-np_K^{\rm bd}\sim\sqrt{2x_K}.
\]
This expansion of the sufficient density bound follows from
$\alpha(1-z)=z^2/2+O(z^3)$.
\section{Additional Empirical Settings}
\label{app:empirical-checks}

\subsection{Population convergence}

Figure~\ref{fig:population-convergence} checks convergence on uniform
relabelings of $C_{10}\sqcup C_{10}$, with $N=20$, $K=3$, and
constant learning rate $0.002$. Four positive invariant initializations approach the same optimal ratio.

\begin{figure}[!htb]
\centering
\includegraphics[width=0.92\textwidth]{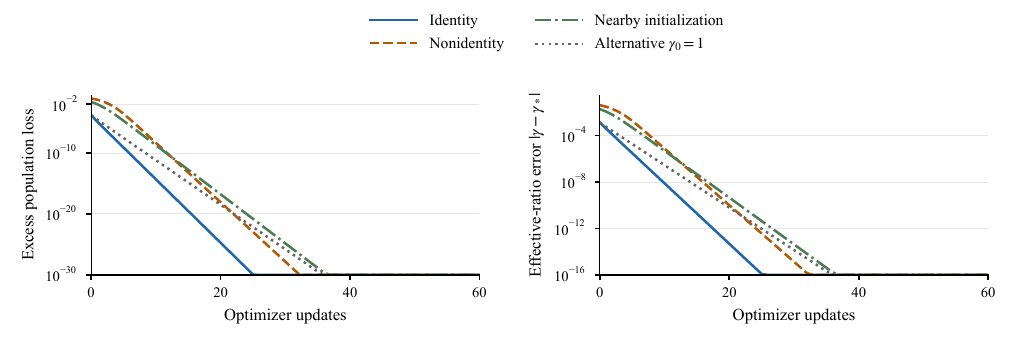}
\caption{Population GD from four initializations. Both excess loss
and error in $\gamma$ approach zero at training depth $K=3$.}
\label{fig:population-convergence}
\end{figure}

\subsection{Learning rates and finite-budget optimization}
\label{app:empirical-optimization}
\label{app:lr-sweep}

The learning-rate sweep in Figure~\ref{fig:lr-sweep} supports the
optimization comparison in Figure~\ref{fig:flop-matched}. Several
rates converge for $K\le4$. At $K=6,8$, none of the tested constant
rates converges within the budget: increasing the rate can cause
divergence. The useful learning-rate range depends on training depth.

\begin{figure}[!htb]
\centering
\includegraphics[width=0.6\textwidth]{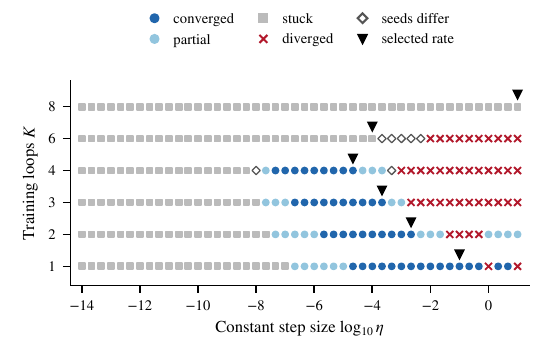}
\caption{Outcomes of the constant-learning-rate sweep at the largest
budget. Diamonds indicate outcomes that vary across runs; triangles mark
the rates used in Figure~\ref{fig:flop-matched}.}
\label{fig:lr-sweep}
\end{figure}

\subsection{Initialization and useful training depth}
\label{app:empirical-compute}

Figure~\ref{fig:compute-tradeoff} adds untrained baselines to
Figure~\ref{fig:choose-k}. Near-identity initialization already solves
harder graphs before training. The trained curves should therefore
be compared with their respective baselines when assessing the
benefit of additional training loops.

\begin{figure}[!htb]
\centering
\includegraphics[width=0.85\textwidth]{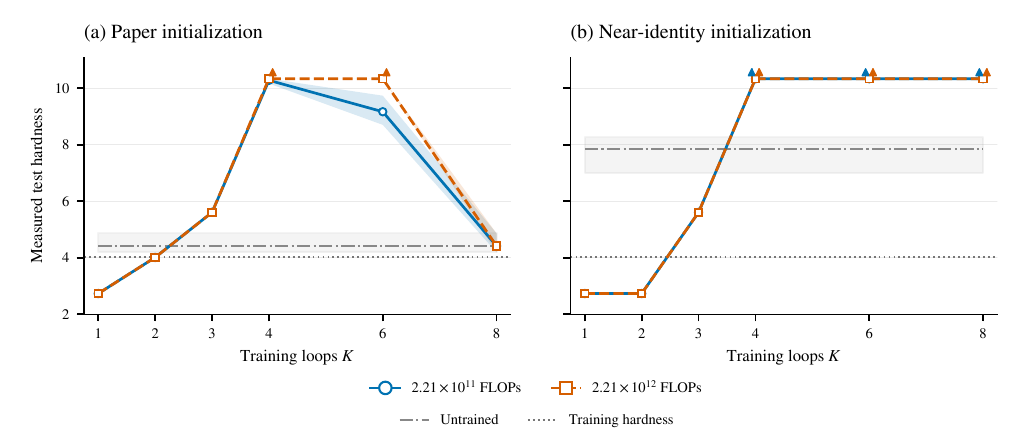}
\caption{Full-matrix Adam training at matched FLOPs. Curves and bands
show mean solved hardness and ranges over three runs. Dash-dotted
lines show untrained performance; the dotted line marks training
hardness. Arrows indicate success on the entire test catalog.}
\label{fig:compute-tradeoff}
\end{figure}

\section{Figure Protocols}
\label{app:figure-protocol}

\subsection{Population training and exact recovery}

The population experiments use full-batch gradient descent in the
Frobenius-isometric coordinates
$x=(\sqrt{N-1}a,b,\sqrt{N-1}c,d)$ of
\eqref{eq:rescaled-coordinate}. The training samples are averaged
with an exactly exchangeable training law, so the full-batch loss
is invariant under every vertex relabeling. From the symmetric
initialization, its full matrix gradient therefore remains in the
$E$--$F$ subspace at every update. Consequently, with the same
learning rate and initialization, four-coordinate GD and full-matrix
GD produce exactly the same matrix iterates in exact arithmetic
(Proposition~\ref{prop:invariance} in
Appendix~\ref{app:proof:prop:invariance} and the Frobenius-coordinate
GD formulation in Appendix~\ref{app:proof:thm:gd}).
This is a discrete-time identity, not a gradient-flow approximation;
no minibatch sampling breaks the symmetry in these experiments.
The four-coordinate implementation is thus an exact computational
reduction of full-matrix GD. The separate Adam experiment below
updates every entry of $W,V$.
All experiments use whole-output row normalization and the loss
$\frac12\|H_K-\Pi_G\|_F^2$. Exact graph accuracy requires every
entry of the diagonal-normalized readout to match connectivity at
the fixed threshold.

Figure~\ref{fig:depth-overview} trains on uniform relabelings of
$8P_4$ from $(a,b,c,d)=(0.90,1.05,1.15,1.03)$ for 3000 updates
with learning rate $2\times10^{-4}$. Displayed predictions use the
literal matrix block through $T=12$.
Figure~\ref{fig:population-convergence} uses initial coefficients
$(1,1,1,1)$, $(0.90,1.05,1.15,1.03)$,
$(0.80,1.25,1.50,0.98)$, and $(0.70,1.30,1.30/0.70,1)$.
Each run takes 3000 updates; the plot shows the first 60.

\subsection{ER test-loop curves}
\label{app:er-protocol}

Figure~\ref{fig:ood-window-families} uses $N=128$ and 8,192 ER
training graphs with $Np_{\rm tr}=0.8$. Their uniform relabeling
orbits define one population shared by $K=2,3,4$.
GD starts from $W=V=I$ and uses 100-decimal-digit arithmetic
and Armijo backtracking,
stopping at relative error below $10^{-10}$ in $-\log\gamma$
against the stationary ratio.
Each test density $Np\in\{0.8,1.6,3\}$ has 512 independent graphs.
We evaluate $T=0,\ldots,32$ and display $T\le12$.
The lower panels give the fraction of graphs certified by
Proposition~\ref{prop:test-window}.
Forward evaluation uses float64 spectral powers and the learned
scalar recurrence. Training and test graphs are sampled independently.

\subsection{Path and ER transfer}
\label{app:transfer-protocol}

Figure~\ref{fig:ood-windows} starts from $W=V=I$ and uses
population GD with 4096-bit arithmetic and Armijo backtracking.
Training stops at relative error below $10^{-10}$ in
$-\log\gamma$ against its stationary value.

\paragraph{Paths.}
Fix $N=8192$. Training graphs contain $\lfloor N/s\rfloor$
copies of $P_s$ and isolated remainder nodes, with uniform relabelings.
The sizes $s=13,21,34,56,93$ approximate hardness levels
$h=5,6,7,8,9$. Train at $K=h-2,h-1,h,h+1$, plus a fixed-$K=5$
control: 22 distinct models in total.
The test family packs as many $L$-vertex paths as possible into
$N$ vertices, leaving the remainder isolated:
\[
G_{N,L}=\lfloor N/L\rfloor P_L
\sqcup\bigl(N-L\lfloor N/L\rfloor\bigr)P_1,
\qquad L=2,\ldots,4096.
\]
For $T=0,\ldots,32$, determine exact recovery from the
within-component scores and the maximum cross-component score.
Choose one common $T$ maximizing the
consecutive solved prefix beginning at $L=2$, breaking ties toward
smaller $T$. This uses test outcomes and permits $T\le K$.
Table~\ref{tab:path-scaling} retains the fixed-$K$ comparison:
increasing training hardness reduces $K-H$ when $K$ is fixed,
so the resulting trend can differ from the constant-gap asymptotics.

\begin{table}[htbp]
\centering
\caption{Solved path lengths at $N=8192$. An asterisk marks the
catalog limit $L=4096$; the last column holds training depth fixed.}
\label{tab:path-scaling}
\begin{tabular}{rrrrrrr}
\toprule
$s$ & $h$ & $K=h-2$ & $K=h-1$ & $K=h$ & $K=h+1$ & $K=5$ \\
\midrule
13 & 5 & 18 & 56 & 1000 & $4096^*$ & 1000 \\
21 & 6 & 32 & 96 & 1549 & $4096^*$ & 96 \\
34 & 7 & 56 & 245 & $4096^*$ & $4096^*$ & 56 \\
56 & 8 & 98 & 618 & $4096^*$ & $4096^*$ & 31 \\
93 & 9 & 173 & 1024 & $4096^*$ & $4096^*$ & 32 \\
\bottomrule
\end{tabular}
\end{table}

Evaluation uses the path eigenvectors and trained ratio, retaining
the uniform background on all $N$ vertices. Spectral tail bounds
control truncation error in the score matrices.

\paragraph{ER graphs.}
Set $N=64$, $K=1,2,3,4$, and
\[
Np_{\rm tr}\in\{0.1,0.2,\ldots,1,1.2,1.4,1.6,1.8,2,2.5,3,4\}.
\]
Each training population uses relabeling orbits of 128 sampled graphs.
The test grid is $Np_{\rm te}=0.05,0.10,\ldots,4$,
then $4.25,4.50,\ldots,8$, $9,10,\ldots,16$, and
$20,24,32,48,64$. Each density has 128 independent test graphs,
shared across models. The main plot shows $p_{\rm te}\le0.1$.
Each cell reports
\[
\max_{T\in\{0,\ldots,18\}}
\frac1{128}\sum_{i=1}^{128}
\mathbf1\{\widehat Y_{G_i}(T)=Y_{G_i}\}.
\]
Thus all graphs in a cell share one depth, selected using test
outcomes. The reported accuracies use one sampled test pool and
the directly evaluated depths $T=0,\ldots,18$.

\subsection{Optimization at matched compute}
\label{app:compute-protocol}

\paragraph{Population GD.}
Figure~\ref{fig:flop-matched} uses the uniform mixture of 64 fixed
$P_4/P_8$ partitions of $N=128$ vertices and 512-bit arithmetic.
For $K=1,2,3,4,6,8$, sweep $\eta=10^{j/3}$,
$j=-42,\ldots,3$, at the largest budget. Initial coefficients
$(0.90,1.05,1.15,1.03)$ receive independent multiplicative
log-jitter of at most $0.002$ for three runs, paired across $K$.
A run converges below excess loss $10^{-8}$, is stuck above half
its initial excess loss, and diverges upon leaving the positive
branch or exceeding loss $10^6$ times its initial value.
Choose the rate converging for all runs in the fewest mean updates;
if all fail to converge, choose the largest rate stable across all runs.
Selected rates for increasing $K$ are
$10^{-1},2.2\times10^{-3},2.2\times10^{-4},2.2\times10^{-5},
10^{-4},10$.
Budgets count $(24K-6)N^3$ leading matrix-multiplication FLOPs per
graph and update. This is the arithmetic cost of the full-matrix model.

\paragraph{Full-matrix Adam.}
Figures~\ref{fig:choose-k} and~\ref{fig:compute-tradeoff} train all
32,768 entries of $W,V\in\mathbb R^{128\times128}$ in float64.
Adam uses batch size 8, betas $(0.9,0.999)$, $\epsilon=10^{-8}$,
and zero weight decay. The training pool contains four relabelings
of each of 64 fixed partitions (256 graphs). Component sizes are
independently chosen as 4 or 8 with equal probability, using $P_4$
when four vertices remain. The pool and minibatch order are shared
across runs.
Initial coefficients are $(0.90,1.05,1.15,1.03)$ or
$(0.999,1,1,1)$, with paired multiplicative log-jitter bounded
by $0.002$ or $0.00005$, respectively. All matrix entries remain trainable.

Run $K=1,2,3,4,6,8$ with three random initializations and budgets
$B=2.21\times10^{11}$ and $10B$ GEMM FLOPs, rounded down to
whole updates at $8(24K-6)N^3$ matrix-multiplication FLOPs each.
An independent pilot run selects the learning rate for each $K$,
initialization family, and budget by endpoint loss on 256
held-out relabelings of the same partitions.
The initial grid is $\{1,3\}\times10^j$, $j=-9,\ldots,-5$,
extended when validation loss improves at a boundary. The $K=8$
paper initialization also tests rates through $10^{-3}$.
Ties favor the smaller rate.
Table~\ref{tab:adam-learning-rates} lists the choices.

\begin{table}[htbp]
\centering
\caption{Adam learning rates selected by an independent pilot run.}
\label{tab:adam-learning-rates}
\begin{tabular}{ccccc}
\toprule
& \multicolumn{2}{c}{$\gamma_0\approx0.96$}
& \multicolumn{2}{c}{$\gamma_0\approx0.999$} \\
\cmidrule(lr){2-3}\cmidrule(lr){4-5}
$K$ & $B$ & $10B$ & $B$ & $10B$ \\
\midrule
1 & $3\times10^{-5}$ & $10^{-6}$ & $3\times10^{-5}$ & $10^{-6}$ \\
2 & $10^{-6}$ & $10^{-7}$ & $3\times10^{-6}$ & $3\times10^{-7}$ \\
3 & $3\times10^{-6}$ & $3\times10^{-7}$ & $3\times10^{-7}$ & $3\times10^{-8}$ \\
4 & $3\times10^{-6}$ & $3\times10^{-7}$ & $10^{-7}$ & $10^{-8}$ \\
6 & $10^{-5}$ & $10^{-5}$ & $3\times10^{-7}$ & $3\times10^{-6}$ \\
8 & $10^{-9}$ & $10^{-9}$ & $10^{-6}$ & $10^{-6}$ \\
\bottomrule
\end{tabular}
\end{table}

The test catalog contains 13 path, seven broom, and four cycle
templates. Select one $T\in\{0,\ldots,16\}$ on 64 validation
relabelings per template, then evaluate 128 independent test
relabelings per template. Report the hardness of the longest prefix
in which every template reaches 95\% exact accuracy at that common
validation-selected depth. Bands show the range over three runs;
$10.34$ is the catalog's maximum hardness. Training uses an NVIDIA GH200.

\clearpage
\subsection{Numerical illustrations of the guarantees}

Figure~\ref{fig:hardness-illustrations} illustrates the certified
depth for growing path families and for a training distribution
with rare long paths.

\begin{figure}[!htb]
\centering
\includegraphics[width=\textwidth]{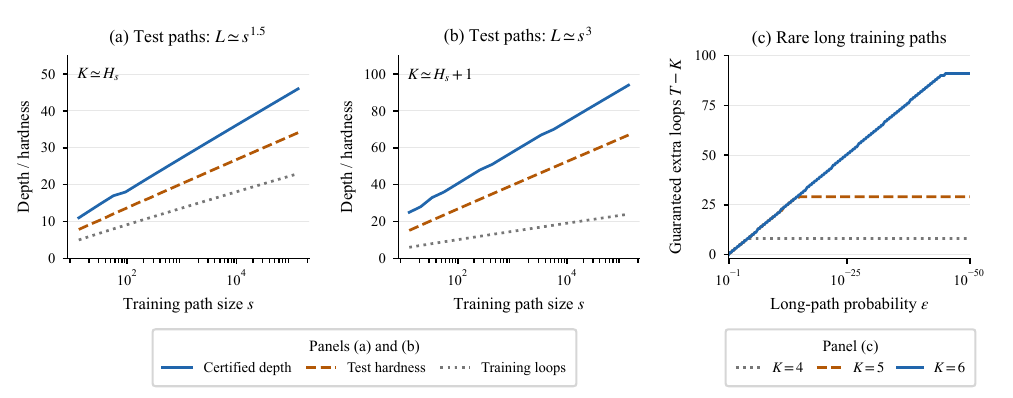}
\caption{Numerical illustrations of the extrapolation guarantees.
(a--b) The certified depth exceeds test-graph hardness for paths
with $L\simeq s^{1.5}$ and $L\simeq s^3$.
(c) Rarer long training paths permit more certified test loops,
until the bound saturates. The left legend applies to (a--b);
the right legend applies to (c).}
\label{fig:hardness-illustrations}
\end{figure}

For panels (a--b), choose $s$ nearest to
$\mathsf{Hard}(P_s)=K-\Delta$ and set
$L=s\lceil s^{q-1}\rceil$, $N=2s\lceil s^{q-1}\rceil$.
The pairs $(\Delta,q)=(0,1.5)$ and $(1,3)$ use
$K=5,\ldots,23$ and $K=6,\ldots,24$, respectively.
Both training and testing use the packed-path construction, with
component sizes $s$ and $L$, respectively. We compute the exact spectral
residual and the common depth in~\eqref{eq:uniform-upper}.
For panel (c), the training law mixes uniform relabelings of
$G_{128,4}$ and $G_{128,64}$ with probabilities $1-\varepsilon$ and
$\varepsilon$. We evaluate the tail-bound certificate from
\eqref{eq:delta-quantile} and~\eqref{eq:uniform-upper}, with
$H=\mathsf{Hard}(P_4)$, $K=4,5,6$, and
$\varepsilon\in[10^{-50},10^{-1}]$. All panels use the fixed experimental threshold.
\end{document}